\documentclass[pdflatex,sn-mathphys-num]{sn-jnl}

\usepackage{graphicx}
\usepackage{multirow}
\usepackage{amsmath,amssymb,amsfonts}
\usepackage{mathrsfs}
\usepackage[title]{appendix}
\usepackage{xcolor}
\usepackage{textcomp}
\usepackage{manyfoot}
\usepackage{booktabs}
\usepackage{algorithm}
\usepackage{algorithmicx}
\usepackage{algpseudocode}
\usepackage{listings}
\usepackage{url}
\usepackage{caption}

\newcommand{\rbi}{\mathbf{i}}
\newcommand{\rbj}{\mathbf{j}}
\newcommand{\rbk}{\mathbf{k}}
\algrenewcommand\algorithmicrequire{\textbf{Input:}}
\algrenewcommand\algorithmicensure{\textbf{Output:}}

\theoremstyle{thmstyleone}
\newtheorem{theorem}{Theorem}[section]

\newtheorem{corollary}[theorem]{Corollary}
\newtheorem{lemma}[theorem]{Lemma}

\newtheorem{proposition}[theorem]{Proposition}

\theoremstyle{thmstyletwo}
\newtheorem{remark}[theorem]{Remark}

\theoremstyle{thmstylethree}
\newtheorem{definition}[theorem]{Definition}

\begin{document}
	
	\title{Graph Regularized Non-negative Reduced Biquaternion Matrix Factorization for Color Image Recognition}
	
	\author[1]{\fnm{Hailang} \sur{Wu}}
	\author[1]{\fnm{Yonghe} \sur{Liu}}
	\author[1]{\fnm{Bingxuan} \sur{Yu}}
	\author[1]{\fnm{Chaoqian} \sur{Li}}\email{wuhailang0217@163.com,~13625698503@163.com,\\2284047786@qq.com,~lichaoqian@ynu.edu.cn}
	
	\affil[1]{\orgdiv{School of Mathematics and Statistics}, \orgname{Yunnan University}, \orgaddress{\city{Kunming}, \country{China}}}
	
	\abstract{Non-negative reduced biquaternion matrix factorization (NRBMF) uses the product of reduced biquaternion (RB) matrices to incorporate the non-negativity constraints of color image pixels into the factorization process. However, NRBMF mainly focuses on reconstruction accuracy and does not explicitly exploit the local geometric structure of image data, which may limit the discriminative ability of the obtained low-dimensional coefficient representations. To address this issue, we propose a graph regularized non-negative reduced biquaternion matrix factorization (GNRBMF) model for color image recognition. The proposed model incorporates a graph Laplacian regularizer into the reduced biquaternion coefficient matrix, encouraging nearby samples in the original space to have similar coefficient representations. Meanwhile, GNRBMF retains the non-negativity property of NRBMF in the reduced biquaternion algebra. To solve the optimization problem, a component-wise alternating projected gradient algorithm is derived, and its convergence properties are analyzed. Experimental results on three color image datasets show that the proposed GNRBMF model achieves competitive or superior recognition performance compared with several methods in most tested settings.}
	\keywords{reduced biquaternion, non-negative matrix factorization, graph regularization, color image recognition, projected gradient method}
	
	\maketitle
	
	\section{Introduction}
	
	Color image recognition is an important task in computer vision and pattern recognition, and has been widely used in applications such as face recognition \cite{lu2018color}, medical imaging \cite{ashikuzzaman2020low}, and remote sensing \cite{kior2024rgb}. Compared with grayscale images, color images provide richer visual information because the RGB channels describe complementary aspects of object appearance. Therefore, an effective color image representation should extract compact features while preserving useful correlations among RGB channels \cite{cui2020color}. In addition, the pixel values of color images are non-negative, which makes non-negative representation models naturally suitable for color image feature extraction.
	
	Non-negative matrix factorization (NMF) \cite{lee1999learning} is a classical non-negative representation method for data representation and feature extraction \cite{lee2000algorithms}. Given a non-negative data matrix, NMF factorizes it into a non-negative basis matrix and a non-negative coefficient matrix. In image recognition, the basis matrix can capture latent local patterns, while the coefficient matrix provides low-dimensional features for classification \cite{guillamet2002non,wang2005non,chen2022novel}. Owing to the non-negativity constraint, NMF often yields parts-based and interpretable representations. However, standard NMF is defined in the real-valued matrix setting and cannot naturally represent the RGB channels within a unified algebraic framework. When it is applied to color images, the RGB channels are usually processed independently or simply concatenated into a long real-valued vector, which may weaken the intrinsic correlations among different color channels \cite{xu2015vector}.
	
	Quaternion algebra \cite{hamilton1866elements} offers a useful hypercomplex representation for color image data. By encoding the three RGB channels into the imaginary components of a quaternion, each color pixel can be represented as a single algebraic entity, which helps preserve and exploit the intrinsic relationships among RGB channels. Quaternion-based methods have been applied to color image recognition \cite{zou2016quaternion}, denoising \cite{yu2019quaternion}, restoration \cite{chen2019low}, and inpainting \cite{miao2020quaternion}. To incorporate non-negativity into quaternion-based factorization, Ke et al. \cite{ke2023quasi} proposed quasi non-negative quaternion matrix factorization (QNQMF). However, due to the noncommutativity of quaternion multiplication, the QNQMF model cannot theoretically guarantee that the product of two quasi non-negative quaternion factor matrices remains quasi non-negative.
	
	To overcome this limitation, Miao et al. \cite{miao2024non} proposed non-negative reduced biquaternion matrix factorization (NRBMF) for color image recognition. Reduced biquaternion algebra has a four-component structure suitable for color image representation and, unlike quaternion algebra, has commutative multiplication \cite{pei2004commutative}. In NRBMF, the coefficient matrix is restricted to a special reduced biquaternion form, which guarantees that the product of the factor matrices remains non-negative in the reduced biquaternion algebra. Thus, NRBMF provides a more rigorous framework for hypercomplex non-negative factorization than QNQMF.
	
	Despite these advantages, NRBMF mainly focuses on reconstruction accuracy and does not explicitly use the local geometric structure of image data. In many recognition tasks, visually similar samples are expected to have similar low-dimensional coefficient representations. Ignoring this structure may limit the discriminative ability of the obtained features, especially when the images contain illumination changes, pose variations, background interference, or large within-class differences. Graph regularization is an effective strategy for preserving local geometric structure in representation learning \cite{cai2011graph}. Its basic principle is that nearby samples in the original space should remain close in the obtained low-dimensional feature space. Since the coefficient matrix directly provides the low-dimensional representation of each sample, it is natural to impose the graph constraint on the coefficient matrix.
	
	To address the limitation that NRBMF does not explicitly exploit the local geometric structure of samples, this paper proposes a graph regularized non-negative reduced biquaternion matrix factorization (GNRBMF) model for color image recognition. The proposed model extends NRBMF by introducing a graph Laplacian regularizer into the reduced biquaternion coefficient matrix. In this way, the obtained coefficient representation preserves local geometric structure while retaining the non-negativity structure of NRBMF.
	
	The main contributions of this paper are summarized as follows.
	
	\begin{itemize}
		\item We propose a graph regularized non-negative reduced biquaternion matrix factorization model for color image recognition. The proposed model incorporates local geometric information into the reduced biquaternion coefficient representation, while preserving the non-negativity structure and the ability of RB representation to model RGB channel correlations. To solve the resulting constrained optimization problem, we further derive a component-wise alternating projected gradient algorithm.
		
		\item We provide a convergence analysis for the proposed optimization algorithm. The objective values generated by the algorithm are proved to be nonincreasing and convergent. Moreover, if the generated sequence converges, then its limit point satisfies the stationarity conditions of the proposed constrained optimization problem.
		
		\item We conduct extensive experiments on CASIA-FaceV5, KDEF, and Asirra for color image recognition. The results show that the proposed GNRBMF model achieves competitive or superior recognition performance compared with several real-valued, quaternion-based, and reduced-biquaternion-based methods in most tested settings.
	\end{itemize}
	
	The rest of this paper is organized as follows. Reduced biquaternion algebra and NRBMF are introduced in Section 2. The proposed GNRBMF model is presented in Section 3. The optimization algorithm is developed and the convergence analysis is given in Section 4. The experimental results are reported in Section 5. Finally, concluding remarks are provided in Section 6.

	\section{Preliminaries}
	
	In this section, we introduce the preliminaries used in this paper. Some basic notations used in the paper are given in Table~\ref{tab:notations}.
	
	\begin{table}[htbp]
		\caption{Basic notations used in this paper}\label{tab:notations}
		\begin{tabular}{@{}ll@{}}
			\toprule
			Notation & Representation \\
			\midrule
			\(\mathbb R\) & Real space \\
			\(\mathbb{RB}\) & Reduced biquaternion space \\
			\(q\), \(\mathbf q\), \(Q\) & Real scalar, vector, and matrix \\
			\(\ddot q\), \(\ddot{\mathbf q}\), \(\ddot Q\) & RB scalar, vector, and matrix \\
			\(\rbi,\rbj,\rbk\) & Imaginary units of reduced biquaternions \\
			\(\operatorname{Re}(\cdot)\) & Real component of an RB number or matrix \\
			\(\operatorname{Im}_{\rbi}(\cdot)\), \(\operatorname{Im}_{\rbj}(\cdot)\), \(\operatorname{Im}_{\rbk}(\cdot)\) & Three imaginary components of an RB number or matrix \\
			\((\cdot)^\top\), \((\cdot)^*\), \((\cdot)^H\) & Transpose, conjugate, and conjugate transpose \\
			\(\langle\cdot,\cdot\rangle\) & Inner product of RB matrices \\
			\(|\cdot|\), \(\|\cdot\|_F\), \(\|\cdot\|_2\) & Modulus, Frobenius norm, and spectral norm of a matrix or Euclidean norm of a vector \\
			\(\operatorname{tr}(\cdot)\) & Trace of a square matrix \\
			\(p\) & Number of nearest neighbors in graph construction \\
			\(\lambda\) & Graph regularization parameter \\
			\(\otimes\) & Hadamard product \\
			\(\triangleq\) & Defined as \\
			\botrule
		\end{tabular}
	\end{table}
	
	\subsection{Reduced Biquaternion}
	
	A reduced biquaternion (RB) number \(\ddot q\in \mathbb{RB}\) is defined as
	\cite{pei2004commutative,schutte1990hypercomplex,dimitrov1992multiplication}
	\[
	\ddot q=q_0+q_1\rbi+q_2\rbj+q_3\rbk,
	\]
	where \(q_\ell\in\mathbb{R}\) for \(\ell=0,1,2,3\). The imaginary units \(\rbi,\rbj,\rbk\) satisfy
	\[
	\rbi^2=\rbk^2=-1,\quad \rbj^2=1,
	\qquad
	\rbi\rbj=\rbj\rbi=\rbk,\quad \rbj\rbk=\rbk\rbj=\rbi,\quad \rbk\rbi=\rbi\rbk=-\rbj.
	\]
	Unlike standard quaternion multiplication, reduced biquaternion multiplication is commutative.
	
	For \(\ddot q=q_0+q_1\rbi+q_2\rbj+q_3\rbk\in \mathbb{RB}\), its conjugate and modulus are defined as
	\cite{pei2008eigenvalues,elmelegy2022linear}
	\[
	\ddot q^{*}=q_0-q_1\rbi+q_2\rbj-q_3\rbk,
	\]
	and
	\[
	|\ddot q|=
	\sqrt{q_0^2+q_1^2+q_2^2+q_3^2}.
	\]
	
	Let \(\ddot Q=(\ddot q_{mn})\in \mathbb{RB}^{M\times N}\) be an RB matrix. It can be written as
	\[
	\ddot Q=Q_0+Q_1\rbi+Q_2\rbj+Q_3\rbk,
	\]
	where \(Q_\ell\in\mathbb{R}^{M\times N}\) for \(\ell=0,1,2,3\). The transpose, conjugate, and conjugate transpose of \(\ddot Q\) are denoted by \(\ddot Q^\top\), \(\ddot Q^*\), and \(\ddot Q^H\), respectively. If \(Q_0=0\), then \(\ddot Q\) is called a pure RB matrix.
	
	For two RB matrices \(\ddot Q=(\ddot q_{mn}),\ddot P=(\ddot p_{mn})\in \mathbb{RB}^{M\times N}\), their inner product is defined as
	\[
	\langle \ddot Q,\ddot P\rangle
	=
	\sum_{m=1}^{M}\sum_{n=1}^{N}\ddot q_{mn}^{*}\ddot p_{mn}.
	\]
	The Frobenius norm of \(\ddot Q\) is defined as
	\[
	\|\ddot Q\|_F
	=
	\sqrt{\operatorname{Re}\big(\langle \ddot Q,\ddot Q\rangle\big)}
	=
	\left(
	\sum_{m=1}^{M}\sum_{n=1}^{N}
	|\ddot q_{mn}|^2
	\right)^{1/2}.
	\]
	
	\begin{definition} \cite{miao2024non}[Non-negative RB matrix]
		An RB matrix
		\[
		\ddot Q=Q_0+Q_1\rbi+Q_2\rbj+Q_3\rbk
		\]
		is called non-negative if all its component matrices are real non-negative matrices, that is,
		\[
		Q_\ell\ge 0,\qquad \ell=0,1,2,3.
		\]
		The set of all non-negative RB matrices of size \(M\times N\) is denoted by \(\mathbb{RB}_{+}^{M\times N}\). In particular, \(\mathbb{RB}_{+\rbj}^{M\times N}\) denotes the subset of non-negative RB matrices whose first and third imaginary component matrices are zero. Hence, if \(\ddot Q\in \mathbb{RB}_{+\rbj}^{M\times N}\), then
		\begin{equation}
			\ddot Q=Q_0+Q_2\rbj,
			\qquad Q_0\ge 0,
			\quad Q_2\ge 0.
			\label{eq:rb_plus_j}
		\end{equation}
	\end{definition}
	
	\subsection{Non-negative Reduced Biquaternion Matrix Factorization}
	\label{subsec:nrbmf}
	
	Given a non-negative RB matrix \(\ddot X=X_0+X_1\rbi+X_2\rbj+X_3\rbk\in \mathbb{RB}_{+}^{M\times N}\), NRBMF seeks two factor matrices \(\ddot W=W_0+W_1\rbi+W_2\rbj+W_3\rbk\in \mathbb{RB}_{+}^{M\times l}\) and \(\ddot H=H_0+H_2\rbj\in \mathbb{RB}_{+\rbj}^{l\times N}\), such that
	\[
	\ddot X\approx \ddot W\ddot H,
	\]
	where \(l\leq \min\{M,N\}\) is the factorization rank. Here, \(\ddot W\) is the basis matrix and \(\ddot H\) is the coefficient matrix. The constraint \(\ddot H\in \mathbb{RB}_{+\rbj}^{l\times N}\) is essential because it ensures that the product \(\ddot W\ddot H\) remains non-negative in the RB algebra. Indeed, using the multiplication rules of reduced biquaternions, we have
	\begin{equation}
		\label{eq:rb_product_nrbmf}
		\begin{aligned}
			\ddot W\ddot H
			&=
			(W_0H_0+W_2H_2)
			+
			(W_1H_0+W_3H_2)\rbi  \\
			&\quad+
			(W_0H_2+W_2H_0)\rbj
			+
			(W_1H_2+W_3H_0)\rbk.
		\end{aligned}
	\end{equation}
	If \(W_0,W_1,W_2,W_3,H_0\), and \(H_2\) are elementwise non-negative, then each component of \(\ddot W\ddot H\) is also elementwise non-negative. Therefore, the structured form \(\ddot H=H_0+H_2\rbj\) guarantees that \(\ddot W\ddot H\) remains non-negative.
	
	The NRBMF model \cite{miao2024non} is formulated as
	\begin{equation}
		\begin{aligned}
			\min_{\ddot W,\ddot H}\quad
			& \frac{1}{2}\|\ddot X-\ddot W\ddot H\|_F^2 \\
			\text{s.t.}\quad
			& \ddot W\in \mathbb{RB}_{+}^{M\times l},\quad
			\ddot H\in \mathbb{RB}_{+\rbj}^{l\times N}.
		\end{aligned}
		\label{eq:nrbmf_model}
	\end{equation}
	
	In this model, \(\ddot W\) can be regarded as the RB basis matrix, while \(\ddot H\) provides a low-dimensional coefficient representation of the samples. For color image data, the RB representation can encode different color-related components in a unified algebraic framework. Meanwhile, the non-negativity constraints preserve the physical meaning of image intensity values and lead to interpretable factorization results.
	
	\section{The Proposed GNRBMF Model}
	\label{sec:gnrbmf_model}
	
	For color image recognition, the coefficient matrix obtained by NRBMF is used as the low-dimensional feature representation. However, NRBMF mainly focuses on the reconstruction error and does not explicitly exploit the local geometric structure among training samples. In many recognition tasks, nearby samples in the original data space are expected to have similar low-dimensional coefficient representations. To incorporate this structural information while maintaining the non-negativity structure of NRBMF, we introduce a graph Laplacian regularization term into the RB coefficient matrix.
	
	Let \(\ddot X=[\ddot{\mathbf x}_1,\ddot{\mathbf x}_2,\ldots,\ddot{\mathbf x}_N]
	\in \mathbb{RB}_{+}^{M\times N}\) be the non-negative RB data matrix, where each column \(\ddot{\mathbf x}_n\) denotes one vectorized color image sample. We construct an undirected \(p\)-nearest-neighbor graph on the training samples. Each sample is treated as a vertex, and the edge weights are stored in the affinity matrix \(A=[A_{mn}]\in\mathbb{R}^{N\times N}\).
	
	The squared distance between two samples \(\ddot{\mathbf x}_m\) and \(\ddot{\mathbf x}_n\) is defined as
	\[
	d_{mn}^2=\|\ddot{\mathbf x}_m-\ddot{\mathbf x}_n\|_F^2.
	\]
	For each sample \(\ddot{\mathbf x}_n\), let \(\mathcal N_p(n)\) denote the set of its \(p\) nearest neighbors, excluding \(\ddot{\mathbf x}_n\) itself. The affinity matrix \(A\) is defined by the following \(0\)--\(1\) weighting rule:
	\begin{equation}
		A_{mn}
		=
		\left\{
		\begin{array}{ll}
			1, & m\in \mathcal N_p(n)\ \text{or}\ n\in \mathcal N_p(m),\\
			0, & \text{otherwise}.
		\end{array}
		\right.
		\label{eq:affinity_general}
	\end{equation}
	The use of the ``or'' condition makes the graph undirected, and hence \(A\) is symmetric. The degree matrix \(D\) and the graph Laplacian \(L\) are defined by
	\[
	D_{mm}=\sum_{n=1}^{N}A_{mn},
	\qquad
	L=D-A.
	\]
	The Laplacian matrix \(L\) is then used to regularize the coefficient representation, so that nearby samples in the original RB sample space tend to remain close in the low-dimensional feature space.
	
	For a real-valued coefficient matrix
	\[
	S=[\mathbf s_1,\mathbf s_2,\ldots,\mathbf s_N]\in\mathbb R^{l\times N},
	\]
	where \(\mathbf s_n\) denotes the low-dimensional representation of the \(n\)-th sample, the standard graph identity gives
	\begin{equation}
		\frac{1}{2}
		\sum_{m,n=1}^{N}
		A_{mn}\|\mathbf s_m-\mathbf s_n\|_2^2
		=
		\operatorname{tr}(SLS^\top),
		\label{eq:graph_identity}
	\end{equation}
	provided that \(A\) is symmetric. Therefore, minimizing \(\operatorname{tr}(SLS^\top)\) encourages adjacent samples on the graph to have similar coefficient vectors.
	
	We now apply this idea to the structured RB coefficient matrix in NRBMF\@. In the NRBMF framework, the coefficient matrix has the form
	\[
	\ddot H=H_0+H_2\rbj\in \mathbb{RB}_{+\rbj}^{l\times N}.
	\]
	The coefficient vector of the \(n\)-th sample can thus be written as
	\[
	\ddot{\mathbf h}_n=\mathbf h_{0,n}+\mathbf h_{2,n}\rbj,
	\]
	where \(\mathbf h_{0,n}\) and \(\mathbf h_{2,n}\) are the \(n\)-th columns of \(H_0\) and \(H_2\), respectively. Since the columns of \(\ddot H\) serve as the low-dimensional representations of the samples, we define the graph regularization term in the RB coefficient space as
	\[
	R(\ddot H)
	=
	\frac{1}{2}
	\sum_{m,n=1}^{N}
	A_{mn}\|\ddot{\mathbf h}_m-\ddot{\mathbf h}_n\|_F^2.
	\]
	For any two coefficient vectors \(\ddot{\mathbf h}_m\) and \(\ddot{\mathbf h}_n\), we have
	\[
	\ddot{\mathbf h}_m-\ddot{\mathbf h}_n
	=
	(\mathbf h_{0,m}-\mathbf h_{0,n})+(\mathbf h_{2,m}-\mathbf h_{2,n})\rbj.
	\]
	Hence,
	\[
	\|\ddot{\mathbf h}_m-\ddot{\mathbf h}_n\|_F^2
	=
	\|\mathbf h_{0,m}-\mathbf h_{0,n}\|_2^2
	+
	\|\mathbf h_{2,m}-\mathbf h_{2,n}\|_2^2.
	\]
	Applying the graph identity in~\eqref{eq:graph_identity} to \(H_0\) and \(H_2\), respectively, yields
	\begin{equation}
		R(\ddot H)
		=
		\operatorname{tr}(H_0LH_0^\top)
		+
		\operatorname{tr}(H_2LH_2^\top).
		\label{eq:rb_graph_regularizer}
	\end{equation}
	Thus, the graph regularization term is imposed on the two nonzero components of \(\ddot H\), namely \(H_0\) and \(H_2\).
	
	By combining the reconstruction term with the graph regularization term in~\eqref{eq:rb_graph_regularizer}, we obtain the proposed graph regularized non-negative reduced biquaternion matrix factorization model:
	\begin{equation}
		\begin{aligned}
			\min_{\ddot W,\ddot H}\quad
			& \frac{1}{2}\|\ddot X-\ddot W\ddot H\|_F^2
			+\frac{\lambda}{2}
			\left(
			\operatorname{tr}(H_0LH_0^\top)
			+
			\operatorname{tr}(H_2LH_2^\top)
			\right) \\
			\text{s.t.}\quad
			& \ddot W\in \mathbb{RB}_{+}^{M\times l},\quad
			\ddot H\in \mathbb{RB}_{+\rbj}^{l\times N},
		\end{aligned}
		\label{eq:gnrbmf_model}
	\end{equation}
	where \(\lambda\ge 0\) is the regularization parameter that balances reconstruction accuracy and graph smoothness. When \(\lambda=0\), the proposed model reduces to NRBMF\@. Thus, GNRBMF extends NRBMF by incorporating local geometric information into the coefficient representation while maintaining the non-negativity structure of NRBMF.
	
	\section{The Optimization Algorithm}
	\label{sec:optimization}
	
	In this section, an alternating projected gradient algorithm is developed for solving the proposed GNRBMF model. The basic notations used below follow Table~\ref{tab:notations}.
	
	\subsection{Alternating Projected Gradient Algorithm}
	\label{subsec:apg_algorithm}
	
	To solve the GNRBMF model in \eqref{eq:gnrbmf_model}, we denote its objective function by \(F(\ddot W,\ddot H)\).
	
	Using the RB product formula in~\eqref{eq:rb_product_nrbmf}, we define the component-wise residual matrices by
	\begin{equation}
		\label{eq:residual_components}
		\begin{aligned}
			E_0&=W_0H_0+W_2H_2-X_0, &
			E_1&=W_1H_0+W_3H_2-X_1,\\
			E_2&=W_0H_2+W_2H_0-X_2, &
			E_3&=W_1H_2+W_3H_0-X_3.
		\end{aligned}
	\end{equation}
	Then
	\[
	\ddot W\ddot H-\ddot X=E_0+E_1\rbi+E_2\rbj+E_3\rbk.
	\]
	Hence the objective function \(F(\ddot W,\ddot H)\) can be rewritten as
	\begin{equation}
		\label{eq:objective_component_form}
		F(\ddot W,\ddot H)
		=
		\frac{1}{2}\sum_{r=0}^{3}\|E_r\|_F^2
		+\frac{\lambda}{2}
		\left[
		\operatorname{tr}(H_0LH_0^\top)
		+\operatorname{tr}(H_2LH_2^\top)
		\right].
	\end{equation}
	
	Based on the component representation in~\eqref{eq:objective_component_form}, we next derive the gradients used in the alternating projected gradient algorithm. Since \(F\) is a real-valued function of RB matrix variables, its gradient with respect to \(\ddot W\) is defined component-wise as
	\[
	\nabla_{\ddot W}F
	\triangleq
	\frac{\partial F}{\partial W_0}
	+\frac{\partial F}{\partial W_1}\rbi
	+\frac{\partial F}{\partial W_2}\rbj
	+\frac{\partial F}{\partial W_3}\rbk.
	\]
	By standard matrix differentiation,
	\begin{equation}
		\label{eq:grad_W_components}
		\begin{aligned}
			\frac{\partial F}{\partial W_0}
			&=E_0H_0^\top+E_2H_2^\top,
			&
			\frac{\partial F}{\partial W_1}
			&=E_1H_0^\top+E_3H_2^\top,\\
			\frac{\partial F}{\partial W_2}
			&=E_0H_2^\top+E_2H_0^\top,
			&
			\frac{\partial F}{\partial W_3}
			&=E_1H_2^\top+E_3H_0^\top.
		\end{aligned}
	\end{equation}
	For compact notation, this component-wise gradient can be written as
	\begin{equation}
		\label{eq:grad_W_compact}
		\nabla_{\ddot W}F(\ddot W,\ddot H)
		=
		(\ddot W\ddot H-\ddot X)\ddot H^{H}.
	\end{equation}
	The compact expression in~\eqref{eq:grad_W_compact} is only a shorthand notation for the corresponding real component-wise gradients.
	
	For the matrix \(\ddot H=H_0+H_2\rbj\), only the real component and the \(\rbj\)-component are involved. Thus,
	\[
	\nabla_{\ddot H}F
	\triangleq
	\frac{\partial F}{\partial H_0}
	+\frac{\partial F}{\partial H_2}\rbj.
	\]
	Since the affinity matrix \(A\) is symmetric, the graph Laplacian \(L=D-A\) is also symmetric. Therefore, for \(r=0,2\),
	\[
	\frac{\partial}{\partial H_r}
	\left[
	\frac{\lambda}{2}\operatorname{tr}(H_rLH_r^\top)
	\right]
	=\lambda H_rL.
	\]
	Combining the reconstruction term and the graph regularization term, we obtain
	\begin{equation}
		\label{eq:grad_H_components}
		\begin{aligned}
			\frac{\partial F}{\partial H_0}
			&=
			W_0^\top E_0
			+W_1^\top E_1
			+W_2^\top E_2
			+W_3^\top E_3
			+\lambda H_0L,\\
			\frac{\partial F}{\partial H_2}
			&=
			W_2^\top E_0
			+W_3^\top E_1
			+W_0^\top E_2
			+W_1^\top E_3
			+\lambda H_2L.
		\end{aligned}
	\end{equation}
	The gradients in~\eqref{eq:grad_W_components} and~\eqref{eq:grad_H_components} are then used to construct the alternating projected gradient updates.
	
	The constrained GNRBMF problem is solved by alternately updating \(\ddot W\) and \(\ddot H\). Before presenting the update rules, we first define the projections used in the algorithm.
	
	Let \(P_C(\cdot)\) denote the Euclidean projection onto a closed convex set \(C\). In particular, let \(P_{\mathbb R_{+}^{m\times n}}\) denote the element-wise projection onto the non-negative real matrix cone, namely \(P_{\mathbb R_{+}^{m\times n}}(Y)\triangleq \max(Y,0)\). For \(\ddot Q=Q_0+Q_1\rbi+Q_2\rbj+Q_3\rbk\in \mathbb{RB}^{m\times n}\), the projection onto \(\mathbb{RB}_{+}^{m\times n}\) is
	\begin{equation}
		\label{eq:projection_RB_plus}
		\begin{aligned}
			P_{\mathbb{RB}_{+}^{m\times n}}(\ddot Q)
			\triangleq&
			P_{\mathbb R_{+}^{m\times n}}(Q_0)
			+P_{\mathbb R_{+}^{m\times n}}(Q_1)\rbi
			+P_{\mathbb R_{+}^{m\times n}}(Q_2)\rbj
			+P_{\mathbb R_{+}^{m\times n}}(Q_3)\rbk.
		\end{aligned}
	\end{equation}
	The projection onto the structured feasible set \(\mathbb{RB}_{+\rbj}^{m\times n}\) is
	\begin{equation}
		\label{eq:projection_RB_plus_j}
		P_{\mathbb{RB}_{+\rbj}^{m\times n}}(\ddot Q)
		\triangleq
		P_{\mathbb R_{+}^{m\times n}}(Q_0)
		+P_{\mathbb R_{+}^{m\times n}}(Q_2)\rbj.
	\end{equation}
	Thus, this projection keeps and non-negativizes the real and \(\rbj\)-components while removing the \(\rbi\)- and \(\rbk\)-components.
	
	Given the current iterate \((\ddot W^t,\ddot H^t)\), the projected gradient updates are
	\begin{equation}
		\label{eq:W_update}
		\ddot W^{t+1}
		=
		P_{\mathbb{RB}_{+}^{M\times l}}
		\left[
		\ddot W^t
		-\alpha_t\nabla_{\ddot W}F(\ddot W^t,\ddot H^t)
		\right],
	\end{equation}
	and
	\begin{equation}
		\label{eq:H_update}
		\ddot H^{t+1}
		=
		P_{\mathbb{RB}_{+\rbj}^{l\times N}}
		\left[
		\ddot H^t
		-\beta_t\nabla_{\ddot H}F(\ddot W^{t+1},\ddot H^t)
		\right].
	\end{equation}
	The update of \(\ddot H\) uses the updated value \(\ddot W^{t+1}\), yielding a Gauss--Seidel type alternating scheme.
	
	To ensure sufficient descent, the step sizes \(\alpha_t\) and \(\beta_t\) are selected by Armijo backtracking line search. Let \(0<\mu<1\) and \(0<\sigma<1\). For the \(\ddot W\)-update, set \(\alpha_t=\mu^{d_t}\), where \(d_t\) is the first non-negative integer such that
	\begin{equation}
		\label{eq:armijo_W}
		\begin{aligned}
			F(\ddot W^{t+1},\ddot H^t)-F(\ddot W^t,\ddot H^t)
			&\le
			\sigma
			\operatorname{Re}\!\left(\left\langle
			\nabla_{\ddot W}F(\ddot W^t,\ddot H^t),
			\ddot W^{t+1}-\ddot W^t
			\right\rangle\right).
		\end{aligned}
	\end{equation}
	For the \(\ddot H\)-update, set \(\beta_t=\mu^{s_t}\), where \(s_t\) is the first non-negative integer such that
	\begin{equation}
		\label{eq:armijo_H}
		\begin{aligned}
			F(\ddot W^{t+1},\ddot H^{t+1})
			-F(\ddot W^{t+1},\ddot H^t)
			&\le
			\sigma
			\operatorname{Re}\!\left(\left\langle
			\nabla_{\ddot H}F(\ddot W^{t+1},\ddot H^t),
			\ddot H^{t+1}-\ddot H^t
			\right\rangle\right).
		\end{aligned}
	\end{equation}
	The real parts of the RB inner products in~\eqref{eq:armijo_W} and~\eqref{eq:armijo_H} are evaluated through their corresponding real component representations.
	
	The complete procedure is summarized in Algorithm~\ref{alg:gnrbmf}.
	
	\begin{algorithm}[htbp]
		\caption{RB projected gradient method for GNRBMF}\label{alg:gnrbmf}
		\begin{algorithmic}[1]
			\Require \(\ddot X\in\mathbb{RB}_{+}^{M\times N}\); factorization rank \(l\); graph Laplacian \(L\); regularization parameter \(\lambda\); parameters \(0<\mu<1\), \(0<\sigma<1\); tolerance \(\mathrm{tol}>0\); maximum number of iterations \(T_{\max}\).
			\Ensure \(\ddot W\in\mathbb{RB}_{+}^{M\times l}\) and \(\ddot H\in\mathbb{RB}_{+\rbj}^{l\times N}\).
			\State Initialize \(\ddot W^0\in\mathbb{RB}_{+}^{M\times l}\) and \(\ddot H^0\in\mathbb{RB}_{+\rbj}^{l\times N}\); set \(t=0\) and \(r=+\infty\).
			\While{\(t<T_{\max}\) and \(r\ge \mathrm{tol}\)}
			\State Compute \(G_W^t=\nabla_{\ddot W}F(\ddot W^t,\ddot H^t)\) by~\eqref{eq:grad_W_components}.
			\State Update \(\ddot W^{t+1}=P_{\mathbb{RB}_{+}^{M\times l}}(\ddot W^t-\alpha_tG_W^t)\), where \(\alpha_t=\mu^{d_t}\), and \(d_t\) is the first non-negative integer for which~\eqref{eq:armijo_W} holds.
			\State Compute \(G_H^t=\nabla_{\ddot H}F(\ddot W^{t+1},\ddot H^t)\) by~\eqref{eq:grad_H_components}.
			\State Update \(\ddot H^{t+1}=P_{\mathbb{RB}_{+\rbj}^{l\times N}}(\ddot H^t-\beta_tG_H^t)\), where \(\beta_t=\mu^{s_t}\), and \(s_t\) is the first non-negative integer for which~\eqref{eq:armijo_H} holds.
			\State Compute \(\displaystyle r=\frac{\|\ddot W^{t+1}\ddot H^{t+1}-\ddot W^t\ddot H^t\|_F}{\|\ddot W^t\ddot H^t\|_F}\).
			\State Set \(t\leftarrow t+1\).
			\EndWhile
			\State \Return \(\ddot W^t\) and \(\ddot H^t\).
		\end{algorithmic}
	\end{algorithm}
	
	\begin{remark}
		The computationally expensive step in Algorithm~\ref{alg:gnrbmf} is the search for the step sizes \(\alpha_t\) and \(\beta_t\). In numerical implementation, the backtracking procedure may be initialized by the previously accepted step sizes and then adjusted until~\eqref{eq:armijo_W} and~\eqref{eq:armijo_H} are satisfied. This strategy is commonly used in projected gradient algorithms for matrix factorization and can reduce the number of function evaluations. For the convergence analysis below, the Armijo backtracking procedures are understood to start from the fixed trial step size \(1\) at each iteration.
	\end{remark}
	
	\subsection{Convergence Analysis}
	
	We now analyze the convergence of Algorithm~\ref{alg:gnrbmf}. Since the objective function is real-valued and the variables are RB matrices, the analysis is carried out through their real component representations. For an arbitrary RB matrix \(\ddot U=U_0+U_1\rbi+U_2\rbj+U_3\rbk\in \mathbb{RB}^{m\times n}\), define
	\[
	\Phi(\ddot U)
	\triangleq
	\begin{bmatrix}
		\operatorname{vec}(U_0)\\
		\operatorname{vec}(U_1)\\
		\operatorname{vec}(U_2)\\
		\operatorname{vec}(U_3)
	\end{bmatrix}
	\in \mathbb R^{4mn}.
	\]
	where \(\operatorname{vec}(\cdot)\) denotes the vectorization operation of a matrix. For another RB matrix \(\ddot V=V_0+V_1\rbi+V_2\rbj+V_3\rbk\in \mathbb{RB}^{m\times n}\), we have
	\[
	\operatorname{Re}\!\left(\langle \ddot U,\ddot V\rangle\right)
	=
	\Phi(\ddot U)^\top \Phi(\ddot V),
	\qquad
	\|\ddot U\|_F=\|\Phi(\ddot U)\|_2.
	\]
	For the structured matrix \(\ddot H=H_0+H_2\rbj\), define
	\[
	\Phi_{\rbj}(\ddot H)
	\triangleq
	\begin{bmatrix}
		\operatorname{vec}(H_0)\\
		\operatorname{vec}(H_2)
	\end{bmatrix}.
	\]
	Under these identifications, \(\mathbb{RB}_{+}^{M\times l}\) and \(\mathbb{RB}_{+\rbj}^{l\times N}\) are closed convex cones in finite-dimensional Euclidean spaces. All projection and convergence arguments below are understood under this equivalent real component representation. In particular, a stationary point refers to a first-order stationary point under this real component representation.
	
	We first state the first-order stationarity conditions for the problem~\eqref{eq:gnrbmf_model}.
	
	\begin{proposition}
		\label{prop:kkt}
		A feasible point \((\ddot W^\#, \ddot H^\#)\) is a stationary point of the problem~\eqref{eq:gnrbmf_model} if and only if
		\begin{equation}
			\label{eq:kkt_component}
			\left\{
			\begin{aligned}
				&\ddot W^\#\in \mathbb{RB}_{+}^{M\times l},\qquad
				\ddot H^\#\in \mathbb{RB}_{+\rbj}^{l\times N},\\
				&\nabla_{\ddot W}F(\ddot W^\#,\ddot H^\#)
				\in \mathbb{RB}_{+}^{M\times l},\qquad
				\nabla_{\ddot H}F(\ddot W^\#,\ddot H^\#)
				\in \mathbb{RB}_{+\rbj}^{l\times N},\\
				&\operatorname{Re}(\ddot W^\#)
				\otimes
				\operatorname{Re}\!\left(
				\nabla_{\ddot W}F(\ddot W^\#,\ddot H^\#)
				\right)=0,\\
				&\operatorname{Im}_{\eta}(\ddot W^\#)
				\otimes
				\operatorname{Im}_{\eta}\!\left(
				\nabla_{\ddot W}F(\ddot W^\#,\ddot H^\#)
				\right)=0,
				\qquad \eta=\rbi,\rbj,\rbk,\\
				&\operatorname{Re}(\ddot H^\#)
				\otimes
				\operatorname{Re}\!\left(
				\nabla_{\ddot H}F(\ddot W^\#,\ddot H^\#)
				\right)=0,\\
				&\operatorname{Im}_{\rbj}(\ddot H^\#)
				\otimes
				\operatorname{Im}_{\rbj}\!\left(
				\nabla_{\ddot H}F(\ddot W^\#,\ddot H^\#)
				\right)=0.
			\end{aligned}
			\right.
		\end{equation}
	\end{proposition}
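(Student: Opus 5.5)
We work throughout in the real component representation fixed just before the statement. Under the identification \(\Phi\), the variable \((\ddot W,\ddot H)\) corresponds to the real vector \(z=(\Phi(\ddot W),\Phi_{\rbj}(\ddot H))\). The feasible set becomes the closed convex cone \(\mathcal K=\mathbb R_+^{4Ml}\times\mathbb R_+^{2lN}\), which is the nonnegative orthant in \(\mathbb R^{4Ml+2lN}\). The objective \(F\) in \eqref{eq:objective_component_form} is a smooth polynomial in the real entries of \(W_0,\dots,W_3,H_0,H_2\). Its Euclidean gradient with respect to \(z\) is exactly \((\Phi(\nabla_{\ddot W}F),\Phi_{\rbj}(\nabla_{\ddot H}F))\), by the component formulas \eqref{eq:grad_W_components} and \eqref{eq:grad_H_components} and the definitions of \(\nabla_{\ddot W}F,\nabla_{\ddot H}F\). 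Following the convention stated just before the proposition, a first-order stationary point means a feasible \(z^\#\) with
\[
\nabla F(z^\#)^\top (z-z^\#)\ge 0\quad\text{for all } z\in\mathcal K .
\]
Equivalently, \(-\nabla F(z^\#)\) lies in the normal cone of \(\mathcal K\) at \(z^\#\).

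The core step is the standard characterization of this variational inequality on the orthant. It says that \(z^\#\ge 0\) is stationary if and only if
\[
g:=\nabla F(z^\#)\ge 0 \qquad\text{and}\qquad z^\#\otimes g=0 .
\]

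For the "if" direction, take any \(z\ge0\). Then \(g^\top(z-z^\#)=g^\top z-g^\top z^\#=g^\top z\ge0\), since \(g^\top z^\#=\sum_k z^\#_k g_k=0\) and \(g,z\ge0\).

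For the "only if" direction, we test the inequality with specific feasible points.
- Choosing \(z=z^\#+e_k\), with \(e_k\) the \(k\)-th unit vector, gives \(g_k\ge0\) for every \(k\).
- Choosing \(z=0\) gives \(-g^\top z^\#\ge0\).
- Choosing \(z=2z^\#\) gives \(g^\top z^\#\ge0\).
- Hence \(g^\top z^\#=0\). Since every term \(z^\#_k g_k\) is nonnegative and they sum to zero, each term vanishes.

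This yields complementarity entrywise.

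The final step translates these vector conditions back into the RB notation, block by block.
- The nonnegativity \(z^\#\ge0\) is precisely \(\ddot W^\#\in\mathbb{RB}_+^{M\times l}\) and \(\ddot H^\#\in\mathbb{RB}_{+\rbj}^{l\times N}\).
- The condition \(g\ge0\) restricted to the four \(\ddot W\)-blocks gives \(\nabla_{\ddot W}F\in\mathbb{RB}_+^{M\times l}\). Restricted to the two \(\ddot H\)-blocks, it gives \(\nabla_{\ddot H}F\in\mathbb{RB}_{+\rbj}^{l\times N}\). The latter is meaningful because \(\nabla_{\ddot H}F\) has by definition only real and \(\rbj\)-components.
- The entrywise complementarity \(z^\#\otimes g=0\), split into the blocks \(\operatorname{vec}(W_0),\dots,\operatorname{vec}(W_3),\operatorname{vec}(H_0),\operatorname{vec}(H_2)\), gives exactly the Hadamard-product equations for \(\operatorname{Re}\) and \(\operatorname{Im}_\eta\) in \eqref{eq:kkt_component}.

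The only point needing care is the identification of the real gradient of \(F\) with \(\Phi\) of the RB gradients. In particular, the \(\rbi,\rbk\) components of \(\ddot H\) are not variables, so they contribute no conditions. This identification holds by construction, since the RB gradients were defined componentwise, so no real obstacle is expected.
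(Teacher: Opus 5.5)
Your proof is correct and follows the same route as the paper: you pass to the real component representation, where the feasible set is a nonnegative orthant and the RB gradients coincide with the Euclidean gradient blocks. You then apply the orthant characterization (nonnegative gradient plus entrywise complementarity) and translate back block by block. The only difference is that you prove that characterization explicitly via the test points \(z^\#+e_k\), \(0\), and \(2z^\#\), whereas the paper simply cites it as standard.
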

	
	\begin{proof}
		Under the real component representations \(\Phi\) and \(\Phi_{\rbj}\), the feasible sets become non-negative orthants in finite-dimensional Euclidean spaces. For a differentiable optimization problem over a non-negative orthant, the first-order stationarity condition is equivalent to the component-wise non-negativity of the gradient and the complementarity condition between each variable and the corresponding gradient component. Rewriting these conditions in RB form gives~\eqref{eq:kkt_component}. This completes the proof.
	\end{proof}
	
	The stationarity conditions in \eqref{eq:kkt_component} can also be written in the following variational inequality form.
	
	\begin{corollary}
		\label{cor:variational}
		The conditions in~\eqref{eq:kkt_component} are equivalent to
		\begin{equation}
			\label{eq:stationarity_vi}
			\left\{
			\begin{aligned}
				&
				\operatorname{Re}\!\left(\left\langle
				\nabla_{\ddot W}F(\ddot W^\#,\ddot H^\#),
				\ddot Z_W-\ddot W^\#
				\right\rangle\right)
				\ge 0,
				\quad
				\forall \ddot Z_W\in \mathbb{RB}_{+}^{M\times l},\\
				&
				\operatorname{Re}\!\left(\left\langle
				\nabla_{\ddot H}F(\ddot W^\#,\ddot H^\#),
				\ddot Z_H-\ddot H^\#
				\right\rangle\right)
				\ge 0,
				\quad
				\forall \ddot Z_H\in \mathbb{RB}_{+\rbj}^{l\times N}.
			\end{aligned}
			\right.
		\end{equation}
	\end{corollary}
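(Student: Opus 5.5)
We pass to the real component representation and reduce both directions to the standard equivalence, on a non-negative orthant, between the complementarity system and the variational inequality. Write \(\mathbf w^\#=\Phi(\ddot W^\#)\) and \(\mathbf g_W=\Phi(\nabla_{\ddot W}F(\ddot W^\#,\ddot H^\#))\). Similarly write \(\mathbf h^\#=\Phi_{\rbj}(\ddot H^\#)\) and \(\mathbf g_H=\Phi_{\rbj}(\nabla_{\ddot H}F(\ddot W^\#,\ddot H^\#))\).

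For \(\ddot H\) we use the restricted map \(\Phi_{\rbj}\). Both \(\nabla_{\ddot H}F\) and \(\ddot Z_H-\ddot H^\#\) have zero \(\rbi\)- and \(\rbk\)-components, so \(\operatorname{Re}\langle\cdot,\cdot\rangle\) reduces to \(\Phi_{\rbj}(\cdot)^\top\Phi_{\rbj}(\cdot)\). The identity \(\operatorname{Re}(\langle \ddot U,\ddot V\rangle)=\Phi(\ddot U)^\top\Phi(\ddot V)\) stated above therefore turns \eqref{eq:stationarity_vi} into
\[
\mathbf g_W^\top(\mathbf z-\mathbf w^\#)\ge 0\ \ \forall \mathbf z\ge 0,
\qquad
\mathbf g_H^\top(\mathbf y-\mathbf h^\#)\ge 0\ \ \forall \mathbf y\ge 0 .
\]
In the same coordinates, \eqref{eq:kkt_component} reads \(\mathbf w^\#\ge0\), \(\mathbf g_W\ge0\), \(\mathbf w^\#\otimes\mathbf g_W=0\), together with the analogous conditions for \(\mathbf h^\#,\mathbf g_H\). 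The Hadamard conditions on each component matrix \(\operatorname{Re}\), \(\operatorname{Im}_\eta\) stack exactly into the entrywise product of the vectors.

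\emph{From \eqref{eq:kkt_component} to \eqref{eq:stationarity_vi}.} Complementarity gives \(\mathbf g_W^\top\mathbf w^\#=\sum_k (g_W)_k w^\#_k=0\). Then \(\mathbf g_W^\top(\mathbf z-\mathbf w^\#)=\mathbf g_W^\top\mathbf z\ge0\), since both \(\mathbf g_W\) and \(\mathbf z\) are non-negative. The \(\ddot H\) block is handled identically.

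\emph{From \eqref{eq:stationarity_vi} to \eqref{eq:kkt_component}.} Feasibility of \((\ddot W^\#,\ddot H^\#)\) is assumed, since the corollary concerns feasible points. First take \(\mathbf z=\mathbf w^\#+\mathbf e_k\) for any standard basis vector \(\mathbf e_k\); this \(\mathbf z\) is feasible, and the inequality gives \((g_W)_k\ge0\), hence \(\mathbf g_W\ge0\). Next take \(\mathbf z=0\) and \(\mathbf z=2\mathbf w^\#\); these give \(-\mathbf g_W^\top\mathbf w^\#\ge0\) and \(\mathbf g_W^\top\mathbf w^\#\ge0\), so \(\mathbf g_W^\top\mathbf w^\#=0\). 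This is a sum of non-negative terms \((g_W)_kw^\#_k\), so each term vanishes, which gives the complementarity conditions. The \(\ddot H\) block is treated the same way with \(\mathbf y\). Translating back through \(\Phi\) and \(\Phi_{\rbj}\) recovers each line of \eqref{eq:kkt_component}.

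The only delicate point is the bookkeeping for \(\ddot H\). The test matrices \(\ddot Z_H\) range only over \(\mathbb{RB}_{+\rbj}^{l\times N}\), and the gradient \(\nabla_{\ddot H}F\) is defined to have only real and \(\rbj\)-components. Hence the RB inner product must reduce exactly to the Euclidean product of the \(\Phi_{\rbj}\)-vectors, and no spurious \(\rbi\), \(\rbk\) terms can appear. Once this is checked, the rest is the routine orthant argument above.
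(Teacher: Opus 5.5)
Your proof is correct and follows the paper's approach. The paper gives no separate proof of this corollary and relies on the same reduction, via $\Phi$ and $\Phi_{\rbj}$, to non-negative orthants that it uses for Proposition~\ref{prop:kkt}; your two-direction orthant argument (with test points $\mathbf w^\#+\mathbf e_k$, $0$, $2\mathbf w^\#$) and your explicit assumption of feasibility for the reverse direction fill in exactly what that reduction leaves implicit.
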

	
	\begin{proposition}
		\label{prop:lower_bound}
		Assume that the affinity matrix \(A\) is symmetric and elementwise non-negative, and that \(\lambda\ge 0\). Then \(F(\ddot W,\ddot H)\) is bounded below on \(\mathbb{RB}_{+}^{M\times l}\times\mathbb{RB}_{+\rbj}^{l\times N}\).
	\end{proposition}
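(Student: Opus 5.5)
The plan is to show that $F$ is in fact bounded below by $0$ on the whole feasible set, by checking that each of its two terms is non-negative. We work with the component form \eqref{eq:objective_component_form}.

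The reconstruction term is immediate. It equals $\frac12\sum_{r=0}^{3}\|E_r\|_F^2$, which is a sum of squared Frobenius norms of real matrices and hence is non-negative.

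The only step requiring an argument is the graph term, namely that $\operatorname{tr}(H_rLH_r^\top)\ge 0$ for $r=0,2$. Since $A$ is symmetric by assumption, the graph identity \eqref{eq:graph_identity} applies with $S=H_r$ and gives
\[
\operatorname{tr}(H_rLH_r^\top)=\frac12\sum_{m,n=1}^{N}A_{mn}\|\mathbf h_{r,m}-\mathbf h_{r,n}\|_2^2 .
\]
Every summand on the right is the product of $A_{mn}\ge 0$ and a squared norm, so each trace is non-negative. If one prefers not to rely on \eqref{eq:graph_identity} as given, it follows quickly: expand $\|\mathbf s_m-\mathbf s_n\|_2^2$, use $D_{mm}=\sum_n A_{mn}$ and the symmetry of $A$, and the sum collapses to $\operatorname{tr}(SDS^\top)-\operatorname{tr}(SAS^\top)=\operatorname{tr}(SLS^\top)$. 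Equivalently, $L$ is symmetric positive semidefinite, and $\operatorname{tr}(H_rLH_r^\top)=\sum_i \mathbf g_i^\top L\,\mathbf g_i\ge 0$, where $\mathbf g_i^\top$ runs over the rows of $H_r$.

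Since $\lambda\ge0$, combining the two parts yields $F(\ddot W,\ddot H)\ge 0$ for every feasible pair, and indeed for all real component matrices. So $F$ is bounded below on $\mathbb{RB}_{+}^{M\times l}\times\mathbb{RB}_{+\rbj}^{l\times N}$. Non-negativity of the variables is never used; only the symmetry and entrywise non-negativity of $A$ and the sign of $\lambda$ matter. I expect no genuine obstacle here, the one point needing care being the positive semidefiniteness of $L$ handled above.
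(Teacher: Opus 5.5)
Your proposal is correct and follows essentially the same route as the paper. Both arguments note that the reconstruction term is a sum of squares, use the graph identity together with $A_{mn}\ge 0$ (equivalently, $L$ is positive semidefinite) to get $\operatorname{tr}(H_rLH_r^\top)\ge 0$ for $r=0,2$, and use $\lambda\ge 0$ to conclude $F\ge 0$.
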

	
	\begin{proof}
		The reconstruction term \(\frac{1}{2}\|\ddot X-\ddot W\ddot H\|_F^2\) is non-negative. Since \(A\) is symmetric and elementwise non-negative, the graph Laplacian \(L=D-A\) is symmetric positive semidefinite. For any real matrix \(S=[\mathbf s_1,\ldots,\mathbf s_N]\),
		\[
		\operatorname{tr}(SLS^\top)
		=
		\frac{1}{2}
		\sum_{a,b=1}^{N} A_{ab}\|\mathbf s_a-\mathbf s_b\|_2^2
		\ge 0.
		\]
		Thus \(\operatorname{tr}(H_0LH_0^\top)\ge 0\) and \(\operatorname{tr}(H_2LH_2^\top)\ge 0\). Since \(\lambda\ge 0\), it follows that \(F(\ddot W,\ddot H)\ge 0\). Therefore, \(F\) is bounded below on the feasible set. This completes the proof.
	\end{proof}
	
	\begin{lemma}
		\label{lem:projection_descent}
		Let \(\mathcal C\) be a nonempty closed convex set in a finite-dimensional Euclidean space. For \(z\in\mathcal C\), \(g\) in the same space, and \(\tau>0\), define \(z^+=P_{\mathcal C}(z-\tau g)\). Then
		\[
		\langle g,z^+-z\rangle
		\le
		-\frac{1}{\tau}\|z^+-z\|^2
		\le 0.
		\]
	\end{lemma}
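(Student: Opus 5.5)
The plan is to use the variational characterization of the Euclidean projection onto a closed convex set. Recall that for any $y$ in the ambient space, $P_{\mathcal C}(y)$ is the unique point $u\in\mathcal C$ such that
\[
\langle y-u,\, w-u\rangle\le 0 \qquad \text{for all } w\in\mathcal C .
\]
This holds because $u$ minimizes the strictly convex function $\tfrac12\|w-y\|^2$ over $\mathcal C$. The first-order optimality condition of that minimization, $\langle u-y, w-u\rangle\ge 0$ for all $w\in\mathcal C$, is exactly the inequality above. We may quote it as a standard fact; it can be checked quickly by expanding $\|u+s(w-u)-y\|^2\ge\|u-y\|^2$ for $s\in(0,1]$ and letting $s\to 0^+$.

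We apply this characterization with $y=z-\tau g$, $u=z^+$, and the test point $w=z$. The test point is admissible because $z\in\mathcal C$ by assumption. This gives
\[
\langle z-\tau g-z^+,\; z-z^+\rangle\le 0 .
\]
Expanding, the left-hand side equals $\|z-z^+\|^2-\tau\langle g, z-z^+\rangle$. The inequality therefore becomes $\|z^+-z\|^2+\tau\langle g, z^+-z\rangle\le 0$. Dividing by $\tau>0$ yields $\langle g,z^+-z\rangle\le -\frac1\tau\|z^+-z\|^2$. The second inequality, $-\frac1\tau\|z^+-z\|^2\le 0$, is immediate because $\tau>0$ and norms are nonnegative.

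There is no real obstacle. The only point needing care is correctly invoking the projection characterization, in particular the sign convention and the choice of test point $w=z$. In the subsequent use within the paper, $\mathcal C$ will be $\mathbb{RB}_{+}^{M\times l}$ or $\mathbb{RB}_{+\rbj}^{l\times N}$ under the identifications $\Phi$ and $\Phi_{\rbj}$. Under these identifications the inner product is $\operatorname{Re}\langle\cdot,\cdot\rangle$ and the norm is $\|\cdot\|_F$, so the lemma applies directly to the updates \eqref{eq:W_update} and \eqref{eq:H_update}.
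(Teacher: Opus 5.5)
Your proof is correct and follows essentially the same route as the paper: both invoke the variational characterization of the Euclidean projection with the test point $z\in\mathcal C$, expand the inner product, and divide by $\tau>0$. Your sign convention and algebra match the paper's derivation, and your justification of the characterization via $s\to 0^+$ is a sound addition.
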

	
	\begin{proof}
		By the optimality condition of Euclidean projection,
		\[
		\left\langle z^+-(z-\tau g),y-z^+\right\rangle\ge 0,
		\qquad
		\forall y\in\mathcal C.
		\]
		Taking \(y=z\in\mathcal C\) gives
		\[
		\left\langle z^+-z+\tau g,z-z^+\right\rangle\ge 0.
		\]
		Equivalently,
		\[
		-\|z^+-z\|^2-\tau\langle g,z^+-z\rangle\ge 0,
		\]
		which proves the result. This completes the proof.
	\end{proof}
	
	\begin{lemma}
		\label{lem:block_lipschitz}
		For fixed \(\ddot H=H_0+H_2\rbj\), the block gradient
		\(\nabla_{\ddot W}F(\cdot,\ddot H)\) is Lipschitz continuous, and its Lipschitz constant can be taken as
		\[
		L_{\ddot W}(\ddot H)
		=
		2\left(\|H_0\|_2+\|H_2\|_2\right)^2.
		\]
		For fixed \(\ddot W=W_0+W_1\rbi+W_2\rbj+W_3\rbk\), the block gradient
		\(\nabla_{\ddot H}F(\ddot W,\cdot)\) is Lipschitz continuous, and its Lipschitz constant can be taken as
		\[
		L_{\ddot H}(\ddot W)
		=
		2
		\left(
		\|W_0\|_2+\|W_1\|_2+\|W_2\|_2+\|W_3\|_2
		\right)^2
		+\lambda\|L\|_2.
		\]
	\end{lemma}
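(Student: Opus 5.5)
Because \(F\) is quadratic in \(\ddot W\) for fixed \(\ddot H\), and quadratic in \(\ddot H\) for fixed \(\ddot W\), each block gradient is affine in its own variable. Its Lipschitz constant is therefore the operator norm, in the real component representation \(\Phi\) (resp. \(\Phi_{\rbj}\)), of the linear map \(\Delta\mapsto\nabla F(\cdot+\Delta)-\nabla F(\cdot)\). The plan is to write this linear map explicitly from \eqref{eq:residual_components}, \eqref{eq:grad_W_components} and \eqref{eq:grad_H_components}, and then bound its norm crudely with the triangle inequality and \(\|AB\|_F\le\|A\|_2\|B\|_F\).

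For the \(\ddot W\)-block, take \(\ddot W,\ddot W'\) and set \(\Delta_r=W_r-W_r'\). The residual differences are
\[
\delta E_0=\Delta_0H_0+\Delta_2H_2,\quad \delta E_1=\Delta_1H_0+\Delta_3H_2,\quad \delta E_2=\Delta_0H_2+\Delta_2H_0,\quad \delta E_3=\Delta_1H_2+\Delta_3H_0 .
\]
Set \(h=\|H_0\|_2+\|H_2\|_2\). Then each \(\|\delta E_r\|_F\le h\max(\cdot)\); more precisely \(\|\delta E_0\|_F\le \|H_0\|_2\|\Delta_0\|_F+\|H_2\|_2\|\Delta_2\|_F\), and similarly for the others. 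Each gradient-component difference, e.g. \(\delta E_0H_0^\top+\delta E_2H_2^\top\), is bounded by \(\|H_0\|_2\|\delta E_0\|_F+\|H_2\|_2\|\delta E_2\|_F\le h\,(\|\delta E_0\|_F+\|\delta E_2\|_F)\). Collecting terms gives
\[
\|\nabla_{\ddot W}F(\ddot W,\ddot H)-\nabla_{\ddot W}F(\ddot W',\ddot H)\|_F^2\le h^2\sum_{\text{pairs}}(\|\delta E_a\|_F+\|\delta E_b\|_F)^2\le 2h^2\sum_r\|\delta E_r\|_F^2 ,
\]
using \((a+b)^2\le 2(a^2+b^2)\) and the fact that each \(\delta E_r\) appears in exactly two gradient components. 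Applying the same inequality once more, \(\sum_r\|\delta E_r\|_F^2\le 2h^2\sum_r\|\Delta_r\|_F^2\). Combining the two estimates yields \(\|\cdot\|_F\le 2h^2\|\ddot W-\ddot W'\|_F\), which is the constant \(2(\|H_0\|_2+\|H_2\|_2)^2\).

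For the \(\ddot H\)-block, set \(\Delta_0=H_0-H_0'\) and \(\Delta_2=H_2-H_2'\), and let \(w=\sum_r\|W_r\|_2\). Each \(\delta E_r\) is \(W_a\Delta_0+W_b\Delta_2\), so \(\|\delta E_r\|_F\le w\,(\|\Delta_0\|_F+\|\Delta_2\|_F)\). The reconstruction part of each gradient component is \(\sum_r W_{\pi(r)}^\top\delta E_r\), whose norm is at most \(w\max_r\|\delta E_r\|_F\le w^2(\|\Delta_0\|_F+\|\Delta_2\|_F)\). Summing the squares of the two components gives at most \(2w^4(\|\Delta_0\|_F+\|\Delta_2\|_F)^2\le 4w^4\|\Delta\|^2\), hence the bound \(2w^2\|\ddot H-\ddot H'\|_F\). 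The regularization part \(\lambda(\Delta_0L,\Delta_2L)\) contributes at most \(\lambda\|L\|_2\|\ddot H-\ddot H'\|_F\), and the triangle inequality then gives \(L_{\ddot H}(\ddot W)\).

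The main obstacle is bookkeeping rather than ideas. The stated constants are loose, so I expect any reasonable chain of triangle and Cauchy--Schwarz inequalities to fit under them. The one point to get right is the \(\max\)-versus-sum step in the \(\ddot H\)-block: a term like \(\|\sum_r W_{\pi(r)}^\top\delta E_r\|_F\) should be bounded as \(\sum_r\|W_{\pi(r)}\|_2\|\delta E_r\|_F\le w\max_r\|\delta E_r\|_F\), not \(w\sum_r\|\delta E_r\|_F\), which would cost an extra factor of \(4\). If the factor \(2\) fails to close in either block, I would redo the bound at the level of the real block matrix acting on \(\Phi\). That matrix has the form \(\begin{bmatrix}H_0^\top&H_2^\top\\H_2^\top&H_0^\top\end{bmatrix}\otimes I\) applied twice, its spectral norm is at most \(h\), and the \(\ddot W\)-constant then becomes \(h^2\le 2h^2\).
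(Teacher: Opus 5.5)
Your $\ddot H$-block argument is correct as written and lands exactly on $2w^2+\lambda\|L\|_2$. In the $\ddot W$-block, however, the second inequality of your display is off by a factor of $2$.

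Once each gradient-component difference is bounded by $h(\|\delta E_a\|_F+\|\delta E_b\|_F)$, the estimate $(a+b)^2\le 2(a^2+b^2)$ combines with the fact that each $\delta E_r$ occurs in two pairs. Together they yield $4h^2\sum_r\|\delta E_r\|_F^2$, not $2h^2\sum_r\|\delta E_r\|_F^2$. For instance, if all $\|\delta E_r\|_F=1$, your middle expression is $16h^2$ while the right-hand side is $8h^2$.

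Even granting $\sum_r\|\delta E_r\|_F^2\le 2h^2\|\ddot W-\ddot W'\|_F^2$, the chain as written therefore only gives the constant $2\sqrt{2}\,h^2$, which exceeds the claim. This is precisely the sum-versus-max loss you warned about for the $\ddot H$-block.

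The repair is to use in the gradient step the bound you already used for the residuals: $\|H_0\|_2x+\|H_2\|_2y\le h\max(x,y)$, whose square is at most $h^2(x^2+y^2)$. Summing over the four pairs then gives $\|\nabla_{\ddot W}F(\ddot W,\ddot H)-\nabla_{\ddot W}F(\ddot W',\ddot H)\|_F^2\le 2h^2\sum_r\|\delta E_r\|_F^2\le 4h^4\|\ddot W-\ddot W'\|_F^2$. That is the stated constant $2h^2$. Your block-matrix fallback also closes the step, with $h^2$.

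With that repair your proof is valid, but its main route differs from the paper's. The paper writes the reconstruction term as $\frac{1}{2}\|\mathcal T_{\ddot H}(\ddot W)-\ddot X\|_F^2$ with $\mathcal T_{\ddot H}(\ddot W)=\ddot W\ddot H$. It identifies the gradient difference as $\mathcal T_{\ddot H}^{*}\mathcal T_{\ddot H}(\ddot W-\ddot W')$ and bounds only the forward operator, $\|\mathcal T_{\ddot H}\|\le\|H_0\|_2+\|H_2\|_2$. The Lipschitz constant is then $\|\mathcal T_{\ddot H}\|^2\le h^2$, and $2h^2$ is taken as a safe upper bound. The $\ddot H$-block is handled the same way via $\|\mathcal S_{\ddot W}\|\le\sqrt{2}\,w$, plus $\lambda\|L\|_2$ for the Laplacian term. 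Your fallback is exactly this argument.

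Your main route instead bounds the forward map $\Delta\mapsto\delta E$ and the backward map $\delta E\mapsto\delta G$ separately, straight from the explicit component formulas. That is more elementary, and it checks the stated gradient formulas directly without having to recognize them as $\mathcal T^{*}(\mathcal T(\cdot)-\ddot X)$. The cost is that each triangle inequality loses a constant, so the factor-$2$ slack in the stated constants is consumed exactly and leaves no margin for a slip like the one above. The $\mathcal T^{*}\mathcal T$ factorization needs a single norm estimate and gives the sharper $h^2$ in the $\ddot W$-block.
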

	
	\begin{proof}
		All the following arguments are understood under the real component
		representation introduced above. Thus, the Frobenius norm and the corresponding
		inner product are those induced by the associated real vector spaces.
		
		First, fix \(\ddot H=H_0+H_2\rbj\) and consider the linear operator
		\[
		\mathcal T_{\ddot H}:\ddot W\mapsto \ddot W\ddot H.
		\]
		For any perturbation
		\(\Delta\ddot W=\Delta W_0+\Delta W_1\rbi+\Delta W_2\rbj+\Delta W_3\rbk\), the multiplication
		rules of reduced biquaternions imply
		\[
		\|\mathcal T_{\ddot H}(\Delta\ddot W)\|_F
		=
		\|\Delta\ddot W\,\ddot H\|_F
		\le
		\left(\|H_0\|_2+\|H_2\|_2\right)
		\|\Delta\ddot W\|_F.
		\]
		Hence,
		\[
		\|\mathcal T_{\ddot H}\|
		\le
		\|H_0\|_2+\|H_2\|_2.
		\]
		For fixed \(\ddot H\), the reconstruction term can be written as
		\[
		\frac{1}{2}\|\mathcal T_{\ddot H}(\ddot W)-\ddot X\|_F^2.
		\]
		Therefore, for any two matrices \(\ddot W_1\) and \(\ddot W_2\), with
		\(\Delta\ddot W=\ddot W_1-\ddot W_2\), the difference of the corresponding
		block gradients satisfies
		\[
		\nabla_{\ddot W}F(\ddot W_1,\ddot H)
		-
		\nabla_{\ddot W}F(\ddot W_2,\ddot H)
		=
		\mathcal T_{\ddot H}^{*}
		\mathcal T_{\ddot H}(\Delta\ddot W),
		\]
		where \(\mathcal T_{\ddot H}^{*}\) denotes the adjoint operator with respect to
		the real component inner product. Consequently,
		\[
		\begin{aligned}
			\left\|
			\nabla_{\ddot W}F(\ddot W_1,\ddot H)
			-
			\nabla_{\ddot W}F(\ddot W_2,\ddot H)
			\right\|_F
			&\le
			\|\mathcal T_{\ddot H}\|^2
			\|\Delta\ddot W\|_F  \\
			&\le
			\left(\|H_0\|_2+\|H_2\|_2\right)^2
			\|\ddot W_1-\ddot W_2\|_F.
		\end{aligned}
		\]
		It follows that \(2(\|H_0\|_2+\|H_2\|_2)^2\) is a Lipschitz modulus for
		\(\nabla_{\ddot W}F(\cdot,\ddot H)\).
		
		Next, fix
		\(\ddot W=W_0+W_1\rbi+W_2\rbj+W_3\rbk\) and consider the linear operator
		\[
		\mathcal S_{\ddot W}:\ddot H\mapsto \ddot W\ddot H
		\]
		on the structured space \(\mathbb{RB}_{+\rbj}^{l\times N}\). For any perturbation
		\(\Delta\ddot H=\Delta H_0+\Delta H_2\rbj\), the multiplication rules give
		\[
		\|\mathcal S_{\ddot W}(\Delta\ddot H)\|_F
		=
		\|\ddot W\,\Delta\ddot H\|_F
		\le
		\sqrt{2}
		\left(
		\|W_0\|_2+\|W_1\|_2+\|W_2\|_2+\|W_3\|_2
		\right)
		\|\Delta\ddot H\|_F.
		\]
		Thus,
		\[
		\|\mathcal S_{\ddot W}\|^2
		\le
		2
		\left(
		\|W_0\|_2+\|W_1\|_2+\|W_2\|_2+\|W_3\|_2
		\right)^2.
		\]
		For any two structured matrices
		\(\ddot H_1=H_{0,1}+H_{2,1}\rbj\) and
		\(\ddot H_2=H_{0,2}+H_{2,2}\rbj\), let
		\[
		\Delta\ddot H=\ddot H_1-\ddot H_2=\Delta H_0+\Delta H_2\rbj.
		\]
		The difference of the reconstruction-gradient part is given by
		\[
		\mathcal S_{\ddot W}^{*}\mathcal S_{\ddot W}(\Delta\ddot H).
		\]
		Moreover, the graph regularization term contributes the gradient difference
		\[
		\lambda \Delta H_0L+\lambda \Delta H_2L\rbj.
		\]
		Using the submultiplicativity of the spectral norm, we obtain
		\[
		\begin{aligned}
			\|\lambda \Delta H_0L+\lambda \Delta H_2L\rbj\|_F
			&=
			\lambda
			\left(
			\|\Delta H_0L\|_F^2+\|\Delta H_2L\|_F^2
			\right)^{1/2} \\
			&\le
			\lambda\|L\|_2
			\left(
			\|\Delta H_0\|_F^2+\|\Delta H_2\|_F^2
			\right)^{1/2} \\
			&=
			\lambda\|L\|_2\|\Delta\ddot H\|_F.
		\end{aligned}
		\]
		Combining the reconstruction part and the graph regularization part yields
		\[
		\begin{aligned}
			&
			\left\|
			\nabla_{\ddot H}F(\ddot W,\ddot H_1)
			-
			\nabla_{\ddot H}F(\ddot W,\ddot H_2)
			\right\|_F
			\le
			\left[
			2
			\left(
			\|W_0\|_2+\|W_1\|_2+\|W_2\|_2+\|W_3\|_2
			\right)^2
			+\lambda\|L\|_2
			\right]
			\|\ddot H_1-\ddot H_2\|_F.
		\end{aligned}
		\]
		Therefore, \(L_{\ddot H}(\ddot W)\) is a Lipschitz modulus for
		\(\nabla_{\ddot H}F(\ddot W,\cdot)\). The above Lipschitz constants are not necessarily the smallest possible constants, but they are valid upper bounds sufficient for the convergence analysis. This completes the proof.
	\end{proof}
	
	\begin{lemma}
		\label{lem:armijo_well_defined}
		At each iteration of Algorithm~\ref{alg:gnrbmf}, the Armijo line search procedures for both the \(\ddot W\)-update and the \(\ddot H\)-update terminate after finitely many reductions.
	\end{lemma}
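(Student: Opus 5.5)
The plan is to use the descent lemma for functions with Lipschitz gradient, together with Lemma~\ref{lem:projection_descent}. I will treat the $\ddot W$-update in detail, since the $\ddot H$-update is identical. Everything is read through the real representations $\Phi$ and $\Phi_{\rbj}$.

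Fix $\ddot H^t$. By Lemma~\ref{lem:block_lipschitz}, the map $\ddot W\mapsto F(\ddot W,\ddot H^t)$ is continuously differentiable, and its gradient is Lipschitz with modulus $L_t\triangleq L_{\ddot W}(\ddot H^t)$. This holds on the whole real space, not only on the feasible set, because the reconstruction term is a quadratic in $\ddot W$. The standard descent lemma then gives, for any trial step $\alpha>0$ and $\ddot W(\alpha)\triangleq P_{\mathbb{RB}_{+}^{M\times l}}(\ddot W^t-\alpha G_W^t)$,
\[
F(\ddot W(\alpha),\ddot H^t)-F(\ddot W^t,\ddot H^t)\le \operatorname{Re}\big(\langle G_W^t,\ddot W(\alpha)-\ddot W^t\rangle\big)+\frac{L_t}{2}\|\ddot W(\alpha)-\ddot W^t\|_F^2 .
\]

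Next I will bound the quadratic term using Lemma~\ref{lem:projection_descent}. It applies with $z=\ddot W^t$, which is feasible because every iterate is a projection (or the initial point). The lemma gives $\|\ddot W(\alpha)-\ddot W^t\|_F^2\le -\alpha\,\operatorname{Re}\langle G_W^t,\ddot W(\alpha)-\ddot W^t\rangle$. Substituting this, the right-hand side is at most
\[
\Big(1-\frac{\alpha L_t}{2}\Big)\operatorname{Re}\big(\langle G_W^t,\ddot W(\alpha)-\ddot W^t\rangle\big).
\]
The inner product here is $\le 0$. So whenever $1-\alpha L_t/2\ge\sigma$, that is $\alpha\le 2(1-\sigma)/L_t$, the Armijo condition \eqref{eq:armijo_W} holds. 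If $L_t=0$, which happens when $\ddot H^t=0$, the condition holds for every $\alpha$.

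The trial steps are $\mu^{d}$ with $0<\mu<1$, starting from $1$ as stated in the remark. Hence $\mu^d\le 2(1-\sigma)/L_t$ for every $d\ge \lceil \log(2(1-\sigma)/L_t)/\log\mu\rceil$, and the backtracking stops after at most $\max\{0,\lceil\log(2(1-\sigma)/L_t)/\log\mu\rceil\}$ reductions.

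For the $\ddot H$-update, I repeat the same argument with $\ddot W^{t+1}$ fixed. The relevant modulus is $L_{\ddot H}(\ddot W^{t+1})$ from Lemma~\ref{lem:block_lipschitz}, which now includes the term $\lambda\|L\|_2$ coming from the quadratic graph regularizer. The feasible set is $\mathbb{RB}_{+\rbj}^{l\times N}$, and its projection is \eqref{eq:projection_RB_plus_j}. The iterate $\ddot H^t$ is feasible, so Lemma~\ref{lem:projection_descent} applies again.

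The main point needing care is that the descent lemma and the projection lemma use the same inner product. This requires that $\operatorname{Re}\langle\cdot,\cdot\rangle$ equals the Euclidean inner product under $\Phi$ and $\Phi_{\rbj}$, and that the component-wise gradients \eqref{eq:grad_W_components} and \eqref{eq:grad_H_components} are exactly the Euclidean gradients in these coordinates. Both facts hold by the definitions given before Proposition~\ref{prop:kkt}. Since the Lipschitz constants are finite at each iteration, finiteness of the backtracking follows.
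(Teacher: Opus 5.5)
Your proof is correct and follows essentially the same route as the paper: the descent lemma for the block-Lipschitz gradient from Lemma~\ref{lem:block_lipschitz} is combined with Lemma~\ref{lem:projection_descent}. This gives acceptance whenever $\alpha\le 2(1-\sigma)/L$, so the geometric backtracking terminates after finitely many reductions. The differences are cosmetic. You substitute the projection bound into the quadratic term, whereas the paper substitutes it into the linear term to get the factor $\bigl(-\frac{1-\sigma}{\alpha}+\frac{\widehat L_{\ddot W}^t}{2}\bigr)\|\ddot W^+-\ddot W^t\|_F^2$. You also treat the degenerate case $L_t=0$ explicitly, while the paper simply works with a positive upper bound $\widehat L_{\ddot W}^t>0$.
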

	
	\begin{proof}
		It is sufficient to prove the claim for the \(\ddot W\)-update, since the proof for the \(\ddot H\)-update is analogous. For fixed \(\ddot H^t\), let \(\widehat L_{\ddot W}^t>0\) be a positive upper bound of a Lipschitz constant of \(\nabla_{\ddot W}F(\cdot,\ddot H^t)\). For a trial step size \(\alpha>0\), define
		\[
		\ddot W^+
		=
		P_{\mathbb{RB}_{+}^{M\times l}}
		\left[
		\ddot W^t-\alpha\nabla_{\ddot W}F(\ddot W^t,\ddot H^t)
		\right].
		\]
		By the descent lemma,
		\[
		\begin{aligned}
			F(\ddot W^+,\ddot H^t)-F(\ddot W^t,\ddot H^t)
			&\le
			\operatorname{Re}\!\left(\left\langle
			\nabla_{\ddot W}F(\ddot W^t,\ddot H^t),
			\ddot W^+-\ddot W^t
			\right\rangle\right)
			+
			\frac{\widehat L_{\ddot W}^t}{2}
			\|\ddot W^+-\ddot W^t\|_F^2.
		\end{aligned}
		\]
		By Lemma~\ref{lem:projection_descent},
		\[
		\operatorname{Re}\!\left(\left\langle
		\nabla_{\ddot W}F(\ddot W^t,\ddot H^t),
		\ddot W^+-\ddot W^t
		\right\rangle\right)
		\le
		-\frac{1}{\alpha}\|\ddot W^+-\ddot W^t\|_F^2.
		\]
		Therefore,
		\[
		\begin{aligned}
			&F(\ddot W^+,\ddot H^t)-F(\ddot W^t,\ddot H^t)
			-
			\sigma
			\operatorname{Re}\!\left(\left\langle
			\nabla_{\ddot W}F(\ddot W^t,\ddot H^t),
			\ddot W^+-\ddot W^t
			\right\rangle\right) \le
			\left(
			-\frac{1-\sigma}{\alpha}
			+\frac{\widehat L_{\ddot W}^t}{2}
			\right)
			\|\ddot W^+-\ddot W^t\|_F^2.
		\end{aligned}
		\]
		Hence the Armijo condition~\eqref{eq:armijo_W} is satisfied whenever \(\alpha\le 2(1-\sigma)/\widehat L_{\ddot W}^t\). Since, in the theoretical algorithm, the backtracking sequence starts from \(1\) and then follows \(1,\mu,\mu^2,\ldots\), which tends to zero, such a step size is reached after finitely many reductions. The proof for the \(\ddot H\)-update follows in the same way by Lemma~\ref{lem:block_lipschitz}. This completes the proof.
	\end{proof}
	
	\begin{theorem}
		\label{thm:descent}
		Let \(\{(\ddot W^t,\ddot H^t)\}_{t\ge 0}\) be the sequence generated by the infinite version of Algorithm~\ref{alg:gnrbmf}. Then \(\{F(\ddot W^t,\ddot H^t)\}_{t\ge 0}\) is nonincreasing and converges to a finite value. Moreover,
		\[
		\|\ddot W^{t+1}-\ddot W^t\|_F\to 0,
		\qquad
		\|\ddot H^{t+1}-\ddot H^t\|_F\to 0.
		\]
	\end{theorem}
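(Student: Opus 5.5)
The plan is to chain the two Armijo conditions with Lemma~\ref{lem:projection_descent} to get a sufficient-decrease inequality at each half-step, and then telescope it using Proposition~\ref{prop:lower_bound}.

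\textbf{Step 1 (descent of the $\ddot W$-half-step).} By Lemma~\ref{lem:armijo_well_defined}, the accepted step sizes $\alpha_t=\mu^{d_t}$ and $\beta_t=\mu^{s_t}$ exist at every iteration. Since the backtracking starts from $1$, they also satisfy $0<\alpha_t,\beta_t\le 1$. Work in the real component representation $\Phi$, so that $\mathbb{RB}_{+}^{M\times l}$ is a closed convex cone. Apply Lemma~\ref{lem:projection_descent} with $z=\Phi(\ddot W^t)$, $g=\Phi(\nabla_{\ddot W}F(\ddot W^t,\ddot H^t))$ and $\tau=\alpha_t$. This gives
\[
\operatorname{Re}\big(\langle \nabla_{\ddot W}F(\ddot W^t,\ddot H^t),\ddot W^{t+1}-\ddot W^t\rangle\big)\le -\tfrac{1}{\alpha_t}\|\ddot W^{t+1}-\ddot W^t\|_F^2\le -\|\ddot W^{t+1}-\ddot W^t\|_F^2 .
\]
Inserting this into the Armijo condition~\eqref{eq:armijo_W} yields
\[
F(\ddot W^{t+1},\ddot H^t)\le F(\ddot W^t,\ddot H^t)-\sigma\|\ddot W^{t+1}-\ddot W^t\|_F^2 .
\]

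\textbf{Step 2 (descent of the $\ddot H$-half-step and combination).} Repeat Step 1 for the $\ddot H$-update. Use $\Phi_{\rbj}$ and the convex set $\mathbb{RB}_{+\rbj}^{l\times N}$, together with condition~\eqref{eq:armijo_H} and $\beta_t\le 1$. This gives
\[
F(\ddot W^{t+1},\ddot H^{t+1})\le F(\ddot W^{t+1},\ddot H^t)-\sigma\|\ddot H^{t+1}-\ddot H^t\|_F^2 .
\]
Adding the two inequalities gives
\[
F(\ddot W^{t+1},\ddot H^{t+1})+\sigma\big(\|\ddot W^{t+1}-\ddot W^t\|_F^2+\|\ddot H^{t+1}-\ddot H^t\|_F^2\big)\le F(\ddot W^t,\ddot H^t).
\]
In particular, $\{F(\ddot W^t,\ddot H^t)\}$ is nonincreasing.

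\textbf{Step 3 (convergence of the objective and of the increments).} All iterates are feasible, because each is the image of a projection onto the feasible set. Proposition~\ref{prop:lower_bound} therefore gives $F\ge 0$ along the sequence. A nonincreasing sequence bounded below converges to a finite limit $F^\ast\ge 0$. Summing the combined inequality from $t=0$ to $T$ and telescoping gives
\[
\sigma\sum_{t=0}^{T}\big(\|\ddot W^{t+1}-\ddot W^t\|_F^2+\|\ddot H^{t+1}-\ddot H^t\|_F^2\big)\le F(\ddot W^0,\ddot H^0)-F^\ast<\infty
\]
for every $T$. Hence the series converges, its terms tend to zero, and so $\|\ddot W^{t+1}-\ddot W^t\|_F\to0$ and $\|\ddot H^{t+1}-\ddot H^t\|_F\to0$.

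\textbf{Expected obstacle.} The delicate point is obtaining a decrease proportional to $\|\ddot W^{t+1}-\ddot W^t\|_F^2$ without a uniform positive lower bound on the step sizes. The Lipschitz constants in Lemma~\ref{lem:block_lipschitz} depend on the current iterates, which need not stay bounded, so the accepted steps could in principle shrink toward zero. The argument avoids this because the factor $1/\alpha_t$ appearing in Lemma~\ref{lem:projection_descent} only helps when $\alpha_t$ is small. The bound $\alpha_t\le1$, which holds because backtracking starts from the trial step $1$ as stated in the remark after Algorithm~\ref{alg:gnrbmf}, already gives $1/\alpha_t\ge 1$ and hence the constant $\sigma$. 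I would state this explicitly, since it is exactly why no boundedness assumption on the iterates is needed for Theorem~\ref{thm:descent}.
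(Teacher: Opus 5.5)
Your proof is correct and follows essentially the same route as the paper. Combining the Armijo conditions with Lemma~\ref{lem:projection_descent} and the bound $\alpha_t,\beta_t\le 1$ gives the uniform decrease $\sigma\big(\|\ddot W^{t+1}-\ddot W^t\|_F^2+\|\ddot H^{t+1}-\ddot H^t\|_F^2\big)$, which you then telescope using Proposition~\ref{prop:lower_bound}; your explicit remarks on the feasibility of the iterates and on why no boundedness of the iterates is needed are accurate and slightly more careful than the paper's write-up.
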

	
	\begin{proof}
		By the Armijo condition~\eqref{eq:armijo_W} and Lemma~\ref{lem:projection_descent},
		\[
		\begin{aligned}
			F(\ddot W^{t+1},\ddot H^t)-F(\ddot W^t,\ddot H^t)
			&\le
			\sigma
			\operatorname{Re}\!\left(\left\langle
			\nabla_{\ddot W}F(\ddot W^t,\ddot H^t),
			\ddot W^{t+1}-\ddot W^t
			\right\rangle\right) \\
			&\le
			-\frac{\sigma}{\alpha_t}
			\|\ddot W^{t+1}-\ddot W^t\|_F^2
			\le 0.
		\end{aligned}
		\]
		Similarly,
		\[
		\begin{aligned}
			F(\ddot W^{t+1},\ddot H^{t+1})
			-F(\ddot W^{t+1},\ddot H^t)
			&\le
			-\frac{\sigma}{\beta_t}
			\|\ddot H^{t+1}-\ddot H^t\|_F^2
			\le 0.
		\end{aligned}
		\]
		Combining these two inequalities yields
		\[
		F(\ddot W^{t+1},\ddot H^{t+1})
		\le
		F(\ddot W^t,\ddot H^t).
		\]
		Thus, the objective values are nonincreasing. By Proposition~\ref{prop:lower_bound}, they are bounded below and therefore converge to a finite value.
		
		Since the theoretical Armijo searches start from the trial step size \(1\), we have \(\alpha_t=\mu^{d_t}\le 1\) and \(\beta_t=\mu^{s_t}\le 1\). Hence \(1/\alpha_t\ge 1\) and \(1/\beta_t\ge 1\). Therefore,
		\[
		\begin{aligned}
			F(\ddot W^{t+1},\ddot H^{t+1})
			&\le
			F(\ddot W^t,\ddot H^t)
			-\sigma\|\ddot W^{t+1}-\ddot W^t\|_F^2
			-\sigma\|\ddot H^{t+1}-\ddot H^t\|_F^2.
		\end{aligned}
		\]
		Summing this inequality from \(t=0\) to \(T\) gives
		\[
		\begin{aligned}
			\sigma
			\sum_{t=0}^{T}
			\left(
			\|\ddot W^{t+1}-\ddot W^t\|_F^2
			+\|\ddot H^{t+1}-\ddot H^t\|_F^2
			\right)
			\le
			F(\ddot W^0,\ddot H^0)
			-F(\ddot W^{T+1},\ddot H^{T+1}).
		\end{aligned}
		\]
		Letting \(T\to\infty\), the right-hand side remains bounded. Therefore,
		\[
		\sum_{t=0}^{\infty}\|\ddot W^{t+1}-\ddot W^t\|_F^2<\infty,
		\qquad
		\sum_{t=0}^{\infty}\|\ddot H^{t+1}-\ddot H^t\|_F^2<\infty.
		\]
		Consequently, \(\|\ddot W^{t+1}-\ddot W^t\|_F\to 0\) and \(\|\ddot H^{t+1}-\ddot H^t\|_F\to 0\). This completes the proof.
	\end{proof}
	
	\begin{lemma}
		\label{lem:stepsize_lower_bound}
		Suppose that the sequence \(\{(\ddot W^t,\ddot H^t)\}_{t\ge 0}\) generated by Algorithm~\ref{alg:gnrbmf} converges. Then there exist constants \(\underline{\alpha}>0\), \(\underline{\beta}>0\), and \(t_0\) such that \(\alpha_t\ge \underline{\alpha}\) and \(\beta_t\ge \underline{\beta}\) for all \(t\ge t_0\).
	\end{lemma}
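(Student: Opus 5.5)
The plan is to combine the convergence hypothesis with the block Lipschitz constants of Lemma~\ref{lem:block_lipschitz} and the sufficient-decrease threshold from the proof of Lemma~\ref{lem:armijo_well_defined}.

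\textbf{Step 1: uniform bounds along the sequence.} Since \((\ddot W^t,\ddot H^t)\) converges, it is bounded. So there is a constant \(B>0\) with
\[
\|H_0^t\|_2+\|H_2^t\|_2\le B,\qquad \sum_{\ell=0}^{3}\|W_\ell^t\|_2\le B
\]
for all \(t\). By Lemma~\ref{lem:block_lipschitz}, the block Lipschitz moduli satisfy
\[
L_{\ddot W}(\ddot H^t)\le \overline L_W\triangleq 2B^2,\qquad L_{\ddot H}(\ddot W^{t+1})\le \overline L_H\triangleq 2B^2+\lambda\|L\|_2 .
\]
Both bounds hold uniformly in \(t\), with \(\ddot W^{t+1}\) also covered because it belongs to the same bounded sequence.

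\textbf{Step 2: uniform acceptance threshold.} Repeat the descent-lemma argument of Lemma~\ref{lem:armijo_well_defined}, replacing \(\widehat L_{\ddot W}^t\) by the uniform constant \(\overline L_W\). Combining the descent lemma with Lemma~\ref{lem:projection_descent} shows that every trial step \(\alpha\le 2(1-\sigma)/\overline L_W\) satisfies~\eqref{eq:armijo_W}, whatever the value of \(t\). The same reasoning with \(\overline L_H\) shows that every \(\beta\le 2(1-\sigma)/\overline L_H\) satisfies~\eqref{eq:armijo_H}.

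\textbf{Step 3: the backtracking bound.} Backtracking starts at \(1\) and multiplies by \(\mu\) at each reduction. It stops at the first accepted trial, so the accepted step is either \(1\), or \(\mu\) times a rejected trial step. A rejected trial step must exceed the uniform threshold. Hence
\[
\alpha_t\ge \underline\alpha\triangleq \min\{1,\,2\mu(1-\sigma)/\overline L_W\},\qquad \beta_t\ge\underline\beta\triangleq\min\{1,\,2\mu(1-\sigma)/\overline L_H\}.
\]
These bounds hold for all \(t\), so we may take \(t_0=0\). Alternatively, using the limit point \((\ddot W^\#,\ddot H^\#)\), one could choose \(t_0\) so that the iterates lie in a unit ball around it and define \(B\) from that ball.

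\textbf{Main obstacle.} The difficulty is making the bound in Step 2 independent of \(t\). The threshold in Lemma~\ref{lem:armijo_well_defined} depends on \(\widehat L^t\), and the descent lemma must be applied between \(\ddot W^t\) and the trial point \(\ddot W^+\). The trial point may fall outside the convergent sequence, but this does not matter here. For fixed \(\ddot H^t\), the function \(F(\cdot,\ddot H^t)\) is a quadratic whose gradient is globally Lipschitz with modulus \(L_{\ddot W}(\ddot H^t)\), because that modulus depends only on \(\ddot H^t\). The same holds for \(F(\ddot W^{t+1},\cdot)\). Therefore only boundedness of \(\ddot H^t\) and \(\ddot W^{t+1}\) is needed, and convergence supplies it.
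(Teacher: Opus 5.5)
Your proposal is correct and follows the same route as the paper. Convergence gives boundedness, hence uniform bounds on the block Lipschitz moduli from Lemma~\ref{lem:block_lipschitz}; these give a $t$-independent Armijo acceptance threshold from the argument of Lemma~\ref{lem:armijo_well_defined}, and backtracking from $1$ then bounds the accepted steps below. Your constant $\min\{1,2\mu(1-\sigma)/\overline L_W\}$, valid for all $t$ so that $t_0=0$, is slightly sharper than the paper's $\mu\min\{1,2(1-\sigma)/\overline L_W\}$, and you make explicit a point the paper leaves implicit: the descent lemma needs only the global Lipschitz modulus for the fixed other block, so trial points outside the sequence do not matter.
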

	
	\begin{proof}
		Since the sequence converges, it is bounded. By Lemma~\ref{lem:block_lipschitz}, there exists \(\overline L_{\ddot W}>0\) such that, for all sufficiently large \(t\),
		\[
		L_{\ddot W}(\ddot H^t)\le \overline L_{\ddot W}.
		\]
		From the proof of Lemma~\ref{lem:armijo_well_defined}, the Armijo condition for the \(\ddot W\)-update is guaranteed whenever \(\alpha\le 2(1-\sigma)/\overline L_{\ddot W}\). Since the backtracking sequence starts from \(1\) and is reduced by \(\mu\), the accepted step size satisfies
		\[
		\alpha_t
		\ge
		\mu
		\min
		\left\{
		1,\frac{2(1-\sigma)}{\overline L_{\ddot W}}
		\right\}
		\triangleq
		\underline{\alpha}>0
		\]
		for all sufficiently large \(t\).
		
		Similarly, since \(\{\ddot W^{t+1}\}_{t\ge 0}\) is bounded, there exists \(\overline L_{\ddot H}>0\) such that \(L_{\ddot H}(\ddot W^{t+1})\le \overline L_{\ddot H}\) for all sufficiently large \(t\). Hence
		\[
		\beta_t
		\ge
		\mu
		\min
		\left\{
		1,\frac{2(1-\sigma)}{\overline L_{\ddot H}}
		\right\}
		\triangleq
		\underline{\beta}>0.
		\]
		This completes the proof.
	\end{proof}
	
	\begin{theorem}
		\label{thm:stationary}
		Let \(\{(\ddot W^t,\ddot H^t)\}_{t\ge 0}\) be the sequence generated by the infinite version of Algorithm~\ref{alg:gnrbmf}. If
		\[
		\lim_{t\to\infty}\ddot W^t=\ddot W^\#,
		\qquad
		\lim_{t\to\infty}\ddot H^t=\ddot H^\#,
		\]
		then \((\ddot W^\#,\ddot H^\#)\) is a stationary point of problem~\eqref{eq:gnrbmf_model}.
	\end{theorem}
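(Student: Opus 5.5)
The plan is to pass to the limit in the projection fixed-point characterization of the two updates. Lemma~\ref{lem:stepsize_lower_bound} keeps the step sizes away from zero, and the fixed-point relations can then be converted into the variational inequalities of Corollary~\ref{cor:variational}. All arguments are carried out in the real component representation \(\Phi\), \(\Phi_{\rbj}\). There the feasible sets are closed convex cones, \(F\) is a polynomial and hence \(C^1\), and both block gradients are continuous jointly in \((\ddot W,\ddot H)\).

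\emph{Step 1 (projection inequality at iterate \(t\)).} By the characterization of the Euclidean projection used in the proof of Lemma~\ref{lem:projection_descent}, the update \eqref{eq:W_update} gives, for every \(\ddot Z_W\in\mathbb{RB}_{+}^{M\times l}\),
\[
\operatorname{Re}\!\left(\left\langle \ddot W^{t+1}-\ddot W^t+\alpha_t\nabla_{\ddot W}F(\ddot W^t,\ddot H^t),\ \ddot Z_W-\ddot W^{t+1}\right\rangle\right)\ge 0 .
\]
Dividing by \(\alpha_t>0\) yields
\[
\operatorname{Re}\!\left(\left\langle \nabla_{\ddot W}F(\ddot W^t,\ddot H^t),\ \ddot Z_W-\ddot W^{t+1}\right\rangle\right)\ge -\frac{1}{\alpha_t}\operatorname{Re}\!\left(\left\langle \ddot W^{t+1}-\ddot W^t,\ \ddot Z_W-\ddot W^{t+1}\right\rangle\right).
\]
The same reasoning applied to \eqref{eq:H_update} gives the analogous inequality with \(\beta_t\), the gradient \(\nabla_{\ddot H}F(\ddot W^{t+1},\ddot H^t)\), and arbitrary \(\ddot Z_H\in\mathbb{RB}_{+\rbj}^{l\times N}\).

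\emph{Step 2 (pass to the limit).} Fix \(\ddot Z_W\) and \(\ddot Z_H\). Since the sequence converges, Lemma~\ref{lem:stepsize_lower_bound} gives \(1/\alpha_t\le 1/\underline{\alpha}\) and \(1/\beta_t\le 1/\underline{\beta}\) for \(t\ge t_0\). By Cauchy--Schwarz, the right-hand side is bounded in absolute value by
\[
\underline{\alpha}^{-1}\,\|\ddot W^{t+1}-\ddot W^t\|_F\,\|\ddot Z_W-\ddot W^{t+1}\|_F .
\]
This tends to \(0\) because \(\|\ddot W^{t+1}-\ddot W^t\|_F\to0\), which holds either by Theorem~\ref{thm:descent} or directly from convergence, while the second factor stays bounded. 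On the left-hand side, \(\ddot W^t,\ddot W^{t+1}\to\ddot W^\#\) and \(\ddot H^t\to\ddot H^\#\). Continuity of the gradients, which are given by the explicit formulas \eqref{eq:grad_W_components} and \eqref{eq:grad_H_components}, then gives convergence of the left-hand side to \(\operatorname{Re}\langle\nabla_{\ddot W}F(\ddot W^\#,\ddot H^\#),\ddot Z_W-\ddot W^\#\rangle\). Hence this quantity is \(\ge 0\). For the \(\ddot H\)-block the gradient is evaluated at \((\ddot W^{t+1},\ddot H^t)\to(\ddot W^\#,\ddot H^\#)\), so the same limit argument applies.

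\emph{Step 3 (conclude).} The limit point is feasible because the feasible sets are closed and every iterate lies in them. Combined with the two limiting inequalities, this is exactly \eqref{eq:stationarity_vi}, so Corollary~\ref{cor:variational} together with Proposition~\ref{prop:kkt} shows that \((\ddot W^\#,\ddot H^\#)\) is stationary.

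The only delicate point is keeping the factor \(1/\alpha_t\) (resp.\ \(1/\beta_t\)) bounded. Without it, the vanishing of the successive differences would not force the residual term to zero. This is exactly what Lemma~\ref{lem:stepsize_lower_bound} supplies under the convergence hypothesis, so I expect no further obstacle. Joint continuity of the block gradients is immediate from their polynomial form.
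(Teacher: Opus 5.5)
Your proof is correct and takes essentially the same approach as the paper. Both arguments write the projection optimality inequality for each block, bound $1/\alpha_t$ and $1/\beta_t$ via Lemma~\ref{lem:stepsize_lower_bound}, and use vanishing successive differences plus continuity of the block gradients to pass to the variational inequalities of Corollary~\ref{cor:variational}. Your remark that $\|\ddot W^{t+1}-\ddot W^t\|_F\to 0$ already follows from the convergence hypothesis, without Theorem~\ref{thm:descent}, is a small but valid simplification.
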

	
	\begin{proof}
		Since every iterate is feasible and the feasible sets are closed, \(\ddot W^\#\in \mathbb{RB}_{+}^{M\times l}\) and \(\ddot H^\#\in \mathbb{RB}_{+\rbj}^{l\times N}\).
		
		We first consider the \(\ddot W\)-block. From the projection update~\eqref{eq:W_update}, the optimality condition of Euclidean projection gives, for any fixed \(\ddot Z_W\in \mathbb{RB}_{+}^{M\times l}\),
		\[
		\begin{aligned}
			&\operatorname{Re}\!\left(\left\langle
			\ddot W^{t+1}
			-\left[
			\ddot W^t
			-\alpha_t\nabla_{\ddot W}F(\ddot W^t,\ddot H^t)
			\right],
			\ddot Z_W-\ddot W^{t+1}
			\right\rangle\right)
			\ge 0.
		\end{aligned}
		\]
		Equivalently,
		\[
		\begin{aligned}
			&\operatorname{Re}\!\left(\left\langle
			\nabla_{\ddot W}F(\ddot W^t,\ddot H^t),
			\ddot Z_W-\ddot W^{t+1}
			\right\rangle\right) \ge
			\frac{1}{\alpha_t}
			\operatorname{Re}\!\left(\left\langle
			\ddot W^t-\ddot W^{t+1},
			\ddot Z_W-\ddot W^{t+1}
			\right\rangle\right).
		\end{aligned}
		\]
		By Lemma~\ref{lem:stepsize_lower_bound}, \(\alpha_t\) is bounded away from zero for all sufficiently large \(t\). Thus \(1/\alpha_t\) is bounded above. By Theorem~\ref{thm:descent}, \(\ddot W^{t+1}-\ddot W^t\to 0\). Moreover, since \(\ddot W^{t+1}\to \ddot W^\#\), the sequence \(\{\ddot Z_W-\ddot W^{t+1}\}\) is bounded for each fixed feasible \(\ddot Z_W\). Hence the right-hand side tends to zero. Using the continuity of \(\nabla_{\ddot W}F\) and taking the limit gives
		\[
		\operatorname{Re}\!\left(\left\langle
		\nabla_{\ddot W}F(\ddot W^\#,\ddot H^\#),
		\ddot Z_W-\ddot W^\#
		\right\rangle\right)
		\ge 0,
		\qquad
		\forall \ddot Z_W\in \mathbb{RB}_{+}^{M\times l}.
		\]
		
		Next, consider the \(\ddot H\)-block. From the projection update~\eqref{eq:H_update}, for any fixed \(\ddot Z_H\in \mathbb{RB}_{+\rbj}^{l\times N}\),
		\[
		\begin{aligned}
			&\operatorname{Re}\!\left(\left\langle
			\nabla_{\ddot H}F(\ddot W^{t+1},\ddot H^t),
			\ddot Z_H-\ddot H^{t+1}
			\right\rangle\right) \ge
			\frac{1}{\beta_t}
			\operatorname{Re}\!\left(\left\langle
			\ddot H^t-\ddot H^{t+1},
			\ddot Z_H-\ddot H^{t+1}
			\right\rangle\right).
		\end{aligned}
		\]
		By Lemma~\ref{lem:stepsize_lower_bound}, \(\beta_t\) is bounded away from zero for all sufficiently large \(t\). Thus \(1/\beta_t\) is bounded above. By Theorem~\ref{thm:descent}, \(\ddot H^{t+1}-\ddot H^t\to 0\). Moreover, since \(\ddot H^{t+1}\to \ddot H^\#\), the sequence \(\{\ddot Z_H-\ddot H^{t+1}\}\) is bounded for each fixed feasible \(\ddot Z_H\). Also, \((\ddot W^{t+1},\ddot H^t)\to(\ddot W^\#,\ddot H^\#)\). Therefore, by the continuity of \(\nabla_{\ddot H}F\) and by taking the limit, we obtain
		\[
		\operatorname{Re}\!\left(\left\langle
		\nabla_{\ddot H}F(\ddot W^\#,\ddot H^\#),
		\ddot Z_H-\ddot H^\#
		\right\rangle\right)
		\ge 0,
		\qquad
		\forall \ddot Z_H\in \mathbb{RB}_{+\rbj}^{l\times N}.
		\]
		Thus \((\ddot W^\#,\ddot H^\#)\) satisfies the variational inequalities in Corollary~\ref{cor:variational}. Hence it is a stationary point of problem~\eqref{eq:gnrbmf_model}. This completes the proof.
	\end{proof}
	
	\begin{remark}
		Theorem~\ref{thm:descent} shows that the objective values generated by Algorithm~\ref{alg:gnrbmf} are monotonically nonincreasing and convergent. Theorem~\ref{thm:stationary} further shows that if the generated sequence itself converges, then its limit is a stationary point of problem~\eqref{eq:gnrbmf_model}. Since the GNRBMF problem is non-convex with respect to \((\ddot W,\ddot H)\), the above result does not imply convergence to a global minimizer.
	\end{remark}
	
	\section{Experiments}
	\label{sec:experiments}
	
	In this section, numerical experiments on color image recognition are conducted to evaluate the proposed GNRBMF method. The experiments are designed from three aspects. First, GNRBMF is compared with NRBMF to examine the effect of graph regularization. Second, GNRBMF is compared with real-valued, quaternion-based, and reduced-biquaternion-based methods under different factorization ranks. Third, the influence of the regularization parameter \(\lambda\), the number of nearest neighbors \(p\), and the numerical convergence behavior of the proposed algorithm are investigated.
	
	All experiments are run in MATLAB 2024a under Windows 10 on a laptop
	with a 2.30GHz CPU and 16GB of memory.
	
	\subsection{Datasets and Experimental Setup}
	\label{subsec:datasets_and_setup}
	
	The experiments are conducted on three color image datasets, including CASIA-FaceV5, KDEF, and Asirra. The details of these datasets and the experimental settings are described below. Some sample images are shown in Fig.~\ref{fig:dataset_samples}.
	
	The CASIA-FaceV5 dataset\footnote{\url{https://english.ia.cas.cn/db/201610/t20161026_169405.html}} contains 2500 color face images from 500 subjects, with five images for each subject. In our experiment, 120 subjects are randomly selected. For each subject, three images are used for training and the remaining two images are used for testing. Thus, each trial contains 360 training images and 240 test images.
	
	The KDEF dataset\footnote{\url{https://www.kdef.se/}} is a color facial expression dataset developed at Karolinska Institutet. It contains 4900 images of 70 individuals, including 35 females and 35 males. Each individual displays seven emotional expressions, and each expression is photographed from five view angles in two sessions. Compared with CASIA-FaceV5, KDEF introduces more intra-class variations caused by facial expression and pose changes. In our experiment, 30 individuals are randomly selected. For each selected individual, 23 images are used for training and 12 images are used for testing. Thus, each trial contains 690 training images and 360 test images.
	
	The Asirra dataset\footnote{\url{https://www.microsoft.com/en-us/download/details.aspx?id=54765}} is a cat-and-dog image dataset derived from Microsoft Research and Petfinder image resources. Compared with face images, these natural images contain more complex variations in background, pose, illumination, texture, and object scale. In our experiment, 250 cat images and 250 dog images are selected. For each class, 130 images are used for training and 120 images are used for testing. Thus, each trial contains 260 training images and 240 test images.
	
	\begin{figure}[!htbp]
		\centering
		\begin{tabular}{ccc}
			\includegraphics[width=0.30\textwidth]{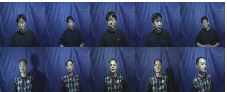} &
			\includegraphics[width=0.30\textwidth]{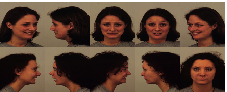} &
			\includegraphics[width=0.30\textwidth]{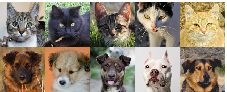} \\
			(a) CASIA-FaceV5 & (b) KDEF & (c) Asirra
		\end{tabular}
		\caption{Sample images from the three color image datasets.}
		\label{fig:dataset_samples}
	\end{figure}
	
	In all experiments, each color image is resized to \(80\times 100\) pixels. Following the full RB representation used in NRBMF \cite{miao2024non}, each color image is represented as
	\[
	\ddot X=X_{\mathrm{av}}+X_R \rbi+X_G \rbj+X_B \rbk,
	\]
	where \(X_R\), \(X_G\), and \(X_B\) denote the red, green, and blue channels, respectively, and
	\[
	X_{\mathrm{av}}=\frac{X_R+X_G+X_B}{3}
	\]
	is used as the real part. This representation preserves the three color channels in the imaginary components and introduces an average intensity component in the real part.
	
	Suppose that the training set contains \(N_{\mathrm{tr}}\) images and the test set contains \(N_{\mathrm{te}}\) images. After RB representation and column-wise vectorization, the training and test data matrices are denoted by \(\ddot X\in \mathbb{RB}_{+}^{M\times N_{\mathrm{tr}}}\) and \(\ddot Y\in \mathbb{RB}_{+}^{M\times N_{\mathrm{te}}}\), respectively, where \(M\) is the dimension of each vectorized RB sample.
	
	For each training set, GNRBMF is first applied to learn the basis matrix \(\ddot W\in \mathbb{RB}_{+}^{M\times l}\) and the coefficient matrix \(\ddot H=H_0+H_2\rbj\in \mathbb{RB}_{+\rbj}^{l\times N_{\mathrm{tr}}}\), where \(l\) is the factorization rank. The columns of \(\ddot H\) are used as the low-dimensional representations of the training samples. For a test sample \(\ddot{\mathbf y}_s\), its coefficient vector is obtained by solving
	\[
	\min_{\ddot{\mathbf h}\in \mathbb{RB}_{+\rbj}^{l\times 1}}
	\|\ddot{\mathbf y}_s-\ddot W\ddot{\mathbf h}\|_F^2.
	\]
	This subproblem is solved by the same projected gradient strategy with the obtained basis matrix fixed. Thus, both training and test samples are represented in the same coefficient space.
	
	For classification, the nearest-neighbor rule based on cosine similarity is used in the coefficient space. Since \(\ddot H=H_0+H_2\rbj\), the feature vector of the \(k\)-th training sample is defined as
	\[
	\mathbf z_k^{(\mathrm{train})}
	=
	\left[
	\mathbf h_{0,k}^{\top},
	\mathbf h_{2,k}^{\top}
	\right]^{\top},
	\]
	where \(\mathbf h_{0,k}\) and \(\mathbf h_{2,k}\) are the \(k\)-th columns of \(H_0\) and \(H_2\), respectively. Similarly, the feature vector of the \(s\)-th test sample is denoted by \(\mathbf z_s^{(\mathrm{test})}\). The cosine similarity between the \(s\)-th test sample and the \(k\)-th training sample is calculated as
	\[
	d_{s,k}
	=
	\frac{
		\left\langle \mathbf z_k^{(\mathrm{train})},\mathbf z_s^{(\mathrm{test})}\right\rangle
	}
	{
		\left\|\mathbf z_k^{(\mathrm{train})}\right\|_2
		\left\|\mathbf z_s^{(\mathrm{test})}\right\|_2
	},
	\qquad
	k=1,2,\ldots,N_{\mathrm{tr}}.
	\]
	The recognition rate is used as the evaluation metric and is defined as
	\[
	\mathrm{Rec}
	=
	\frac{N_{\mathrm{correct}}}{N_{\mathrm{te}}}
	\times 100\%,
	\]
	where \(N_{\mathrm{correct}}\) denotes the number of correctly classified test samples and \(N_{\mathrm{te}}\) denotes the total number of test samples. A higher recognition rate indicates better classification performance.
	
	For graph-based methods, namely GNRBMF and GRIPG, the symmetric \(0\)--\(1\) nearest-neighbor graph is constructed from the training samples according to~\eqref{eq:affinity_general}. The factorization rank is selected from \(l\in\{5,10,15,20,25\}\). Since only GNRBMF and GRIPG involve graph-related parameters, the regularization parameter \(\lambda\) and the number of nearest neighbors \(p\) are selected from predefined candidate sets. Specifically, the regularization parameter is selected from \(\lambda \in \{10^{-3},10^{-2},10^{-1},1,10,100\}\), and the number of nearest neighbors is selected from \(p\in \{3,5,7,9,11\}\).
	
	For each dataset and each factorization rank, the graph-related parameters of GNRBMF and GRIPG are determined by selecting the best-performing values from the same candidate sets following an identical selection procedure. For methods without graph-related parameters, the recognition results are obtained under their standard settings. All recognition experiments are independently repeated ten times with random training/test splits, and the mean recognition rates with standard deviations are reported.
	
	The proposed method is compared with the following methods:
	\begin{itemize}
		\item GNRBMF: the proposed graph regularized non-negative reduced biquaternion matrix factorization method;
		\item GRIPG~\cite{cai2011graph,lin2007projected}: graph regularized real-valued NMF solved by an improved projected gradient scheme;
		\item NRBMF~\cite{miao2024non}: non-negative reduced biquaternion matrix factorization without graph regularization;
		\item QIPG~\cite{ke2023quasi}: quasi non-negative quaternion matrix factorization solved by an improved projected gradient scheme;
		\item QPCA~\cite{liu2022qpca}: quaternion principal component analysis for color image recognition;
		\item RIPG~\cite{lin2007projected}: real-valued NMF solved by an improved projected gradient scheme.
	\end{itemize}
	
	\subsection{Recognition Performance}
	\label{subsec:recognition_performance}
	
	Table~\ref{tab:recognition_results} reports the recognition rates of different methods on the CASIA-FaceV5, KDEF, and Asirra datasets under different factorization ranks. Each result is reported as the mean value and standard deviation over ten independent trials. For each factorization rank, the best result and the second-best result are highlighted in bold and underlined, respectively.
	
	\begin{table*}[!htbp]
		\centering
		\caption{Recognition rates (\%) of different methods on the three datasets.}
		\label{tab:recognition_results}
		\scriptsize
		\renewcommand{\arraystretch}{1.12}
		\resizebox{\textwidth}{!}{
			\begin{tabular}{llccccc}
				\toprule
				Dataset & Method & \(l=5\) & \(l=10\) & \(l=15\) & \(l=20\) & \(l=25\) \\
				\midrule
				
				\multirow{6}{*}{CASIA-FaceV5}
				& GNRBMF & \(\mathbf{70.37\pm4.50}\) & \(\mathbf{78.28\pm3.43}\) & \(\mathbf{83.20\pm3.27}\) & \(\mathbf{84.12\pm3.47}\) & \(\mathbf{85.28\pm3.09}\) \\
				& GRIPG  & \(67.33\pm3.87\) & \(\underline{76.92\pm2.37}\) & \(80.75\pm3.92\) & \(82.83\pm3.22\) & \(83.67\pm3.84\) \\
				& NRBMF  & \(\underline{68.58\pm2.87}\) & \(\underline{76.92\pm3.38}\) & \(\underline{81.33\pm4.55}\) & \(82.50\pm3.58\) & \(\underline{84.25\pm3.40}\) \\
				& QIPG   & \(65.83\pm3.95\) & \(72.50\pm4.07\) & \(76.50\pm3.20\) & \(79.17\pm1.84\) & \(81.50\pm3.88\) \\
				& QPCA   & \(57.67\pm1.27\) & \(59.25\pm1.75\) & \(63.00\pm1.85\) & \(66.75\pm4.96\) & \(68.42\pm4.19\) \\
				& RIPG   & \(66.50\pm4.11\) & \(76.58\pm2.52\) & \(79.75\pm3.21\) & \(\underline{83.42\pm3.44}\) & \(83.08\pm2.79\) \\
				
				\midrule
				
				\multirow{6}{*}{KDEF}
				& GNRBMF & \(\mathbf{87.18\pm3.98}\) & \(\mathbf{91.60\pm2.87}\) & \(\mathbf{94.18\pm2.61}\) & \(\mathbf{93.35\pm2.71}\) & \(\mathbf{94.18\pm2.71}\) \\
				& GRIPG  & \(\underline{86.58\pm4.09}\) & \(90.50\pm2.40\) & \(\underline{91.83\pm2.55}\) & \(\underline{92.42\pm2.79}\) & \(92.25\pm2.14\) \\
				& NRBMF  & \(86.25\pm4.28\) & \(\underline{90.58\pm2.90}\) & \(\underline{91.83\pm2.10}\) & \(92.33\pm2.14\) & \(92.75\pm3.42\) \\
				& QIPG   & \(85.92\pm4.18\) & \(90.08\pm3.59\) & \(91.50\pm3.01\) & \(92.33\pm2.68\) & \(\underline{93.00\pm2.54}\) \\
				& QPCA   & \(74.50\pm1.94\) & \(80.42\pm4.56\) & \(82.67\pm4.39\) & \(86.33\pm4.35\) & \(89.33\pm3.73\) \\
				& RIPG   & \(86.25\pm3.62\) & \(90.50\pm2.88\) & \(90.75\pm3.03\) & \(91.67\pm2.52\) & \(91.83\pm2.24\) \\
				
				\midrule
				
				\multirow{6}{*}{Asirra}
				& GNRBMF & \(\underline{59.58\pm2.34}\) & \(\mathbf{67.67\pm2.97}\) & \(\mathbf{71.58\pm2.11}\) & \(\mathbf{71.33\pm1.62}\) & \(\mathbf{71.50\pm1.09}\) \\
				& GRIPG  & \(58.58\pm2.31\) & \(\underline{66.17\pm2.63}\) & \(67.83\pm2.95\) & \(68.42\pm1.99\) & \(68.75\pm2.32\) \\
				& NRBMF  & \(58.33\pm2.19\) & \(62.75\pm3.54\) & \(\underline{69.00\pm4.42}\) & \(\underline{70.00\pm3.03}\) & \(\underline{69.75\pm2.73}\) \\
				& QIPG   & \(\mathbf{59.75\pm4.49}\) & \(64.25\pm1.92\) & \(66.42\pm2.85\) & \(68.42\pm4.36\) & \(65.75\pm5.63\) \\
				& QPCA   & \(53.42\pm3.60\) & \(53.33\pm2.26\) & \(56.50\pm1.85\) & \(56.92\pm2.87\) & \(60.17\pm2.93\) \\
				& RIPG   & \(58.67\pm1.94\) & \(64.50\pm2.19\) & \(66.58\pm2.07\) & \(68.42\pm2.36\) & \(69.00\pm3.18\) \\
				
				\bottomrule
			\end{tabular}
		}
	\end{table*}
	
	Several observations can be made from Table~\ref{tab:recognition_results}. First, GNRBMF consistently achieves the best recognition rates on the two face datasets. On CASIA-FaceV5, the recognition rate of GNRBMF increases from \(70.37\%\) at \(l=5\) to \(85.28\%\) at \(l=25\), and it obtains the best result under all tested ranks. On KDEF, GNRBMF also ranks first for all values of \(l\), reaching \(94.18\%\) at \(l=15\) and \(l=25\). These results show that the proposed graph regularized RB factorization model is effective for color face recognition, even when facial expression and pose variations are present.
	
	Second, GNRBMF performs better than NRBMF under all listed ranks on the three datasets. Since the two models use the same non-negative RB representation and mainly differ in the graph regularization term, this comparison indicates that preserving the local geometric structure of training samples can improve the discriminative ability of the obtained RB coefficient representation. The improvement is clear on CASIA-FaceV5 and KDEF, and it is also observed on Asirra, where the natural images are more complex than face images.
	
	Third, compared with GRIPG, GNRBMF also gives higher recognition rates under all listed ranks. Both methods use graph regularization, but GRIPG is based on a real-valued representation, whereas GNRBMF uses the RB representation to jointly encode the RGB channels in a unified algebraic form. This comparison suggests that combining graph regularization with the RB representation is beneficial for color image recognition.
	
	On Asirra, the recognition rates are generally lower than those on the two face datasets. This is reasonable because Asirra contains natural cat-and-dog images with more complex variations in background, pose, illumination, texture, and object scale. In this more challenging setting, GNRBMF obtains the best results for \(l=10,15,20,25\), and achieves the second-best result at \(l=5\). Overall, the recognition results demonstrate the effectiveness of introducing graph regularization into non-negative RB matrix factorization.
	
	\subsection{Parameter Sensitivity}
	\label{subsec:parameter_sensitivity}
	
	This subsection analyzes the sensitivity of GNRBMF to two graph-related parameters, namely the regularization parameter \(\lambda\) and the number of nearest neighbors \(p\). The parameter \(\lambda\) controls the contribution of the graph regularization term in the objective function, while \(p\) determines the local connectivity of the nearest-neighbor graph. To examine their effects, one parameter is fixed and the other parameter is varied. Specifically, \(p\) is first fixed as \(p=5\), and \(\lambda\) is selected from \(\lambda \in \{10^{-3},10^{-2},10^{-1},1,10,100\}\). Then, \(\lambda\) is fixed as \(\lambda=0.01\), and \(p\) is selected from \(p\in \{3,5,7,9,11\}\). In both experiments, the factorization rank is tested over \(l\in\{5,10,15,20,25\}\).
	
	Figure~\ref{fig:param_lambda} shows the recognition rates obtained by GNRBMF under different values of \(\lambda\) when \(p=5\). Each panel corresponds to one factorization rank.
	
	\begin{figure}[!htbp]
		\centering
		\begin{tabular}{ccc}
			\includegraphics[width=0.31\textwidth]{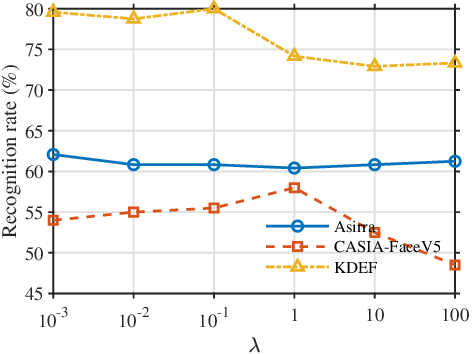} &
			\includegraphics[width=0.31\textwidth]{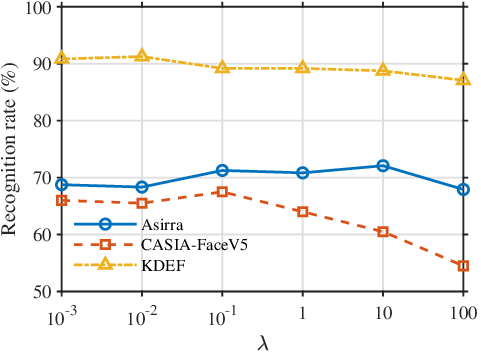} &
			\includegraphics[width=0.31\textwidth]{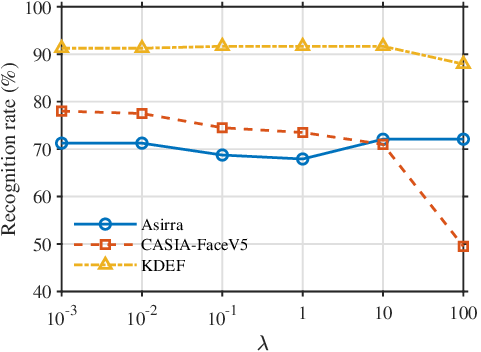} \\
			(a) \(l=5\) & (b) \(l=10\) & (c) \(l=15\) \\
			\multicolumn{3}{c}{%
				\begin{tabular}{cc}
					\includegraphics[width=0.31\textwidth]{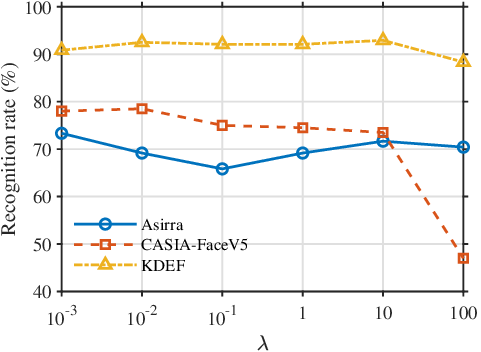} &
					\includegraphics[width=0.31\textwidth]{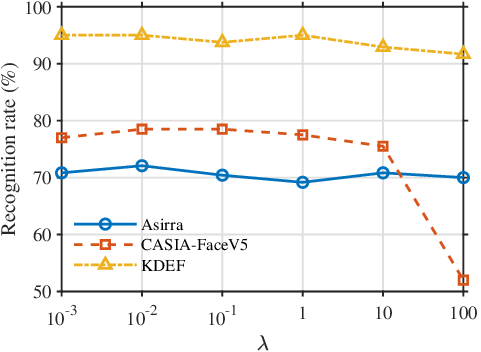} \\
					(d) \(l=20\) & (e) \(l=25\)
			\end{tabular}}
		\end{tabular}
		\caption{Recognition rates of GNRBMF with respect to \(\lambda\) under \(p=5\).}
		\label{fig:param_lambda}
	\end{figure}
	
	As shown in Fig.~\ref{fig:param_lambda}, the influence of \(\lambda\) varies across different datasets and factorization ranks. On CASIA-FaceV5, the recognition performance is relatively stable when \(\lambda\) takes small or moderate values. However, the performance decreases obviously when \(\lambda\) becomes too large, especially when \(\lambda=100\). This indicates that an excessively large graph regularization weight may overemphasize the local smoothness of the coefficient representation and weaken the discriminative differences among samples.
	
	On KDEF and Asirra, the recognition rates are relatively stable over the tested range of \(\lambda\). Although small fluctuations can still be observed under different ranks, no severe performance degradation appears in most cases. This suggests that the graph regularization term can provide useful local structure information within a relatively broad parameter range. Overall, a small-to-moderate value of \(\lambda\) is more suitable for balancing the reconstruction term and the graph regularization term. Therefore, \(\lambda=0.01\) is used as a representative value in the following \(p\)-sensitivity experiment.
	
	Figure~\ref{fig:param_p} shows the recognition rates obtained by GNRBMF under different values of \(p\) when \(\lambda=0.01\). Each panel corresponds to one factorization rank.
	
	\begin{figure}[!htbp]
		\centering
		\begin{tabular}{ccc}
			\includegraphics[width=0.31\textwidth]{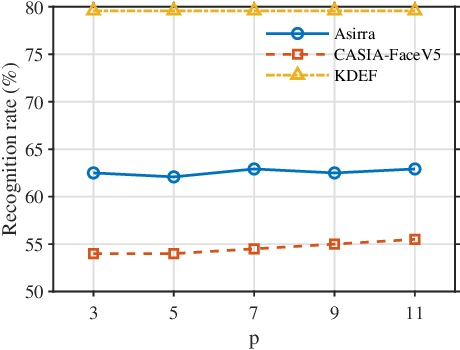} &
			\includegraphics[width=0.31\textwidth]{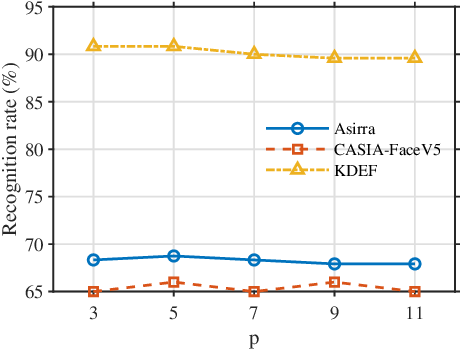} &
			\includegraphics[width=0.31\textwidth]{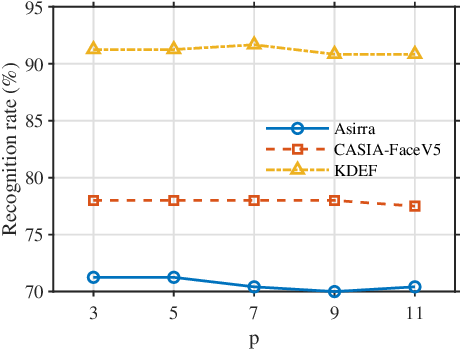} \\
			(a) \(l=5\) & (b) \(l=10\) & (c) \(l=15\) \\
			\multicolumn{3}{c}{%
				\begin{tabular}{cc}
					\includegraphics[width=0.31\textwidth]{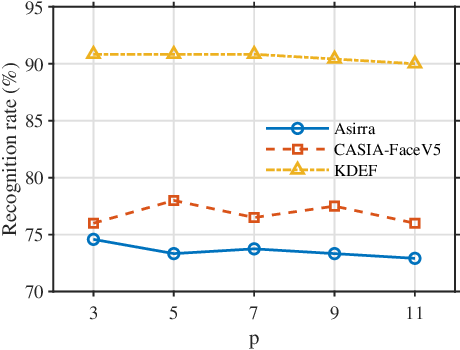} &
					\includegraphics[width=0.31\textwidth]{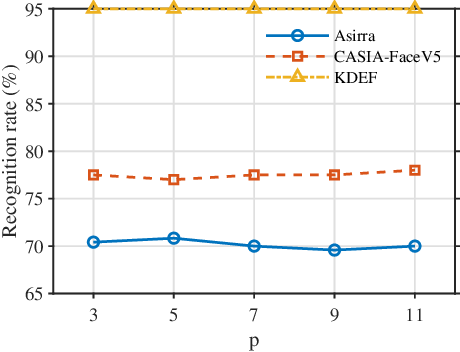} \\
					(d) \(l=20\) & (e) \(l=25\)
			\end{tabular}}
		\end{tabular}
		\caption{Recognition rates of GNRBMF with respect to \(p\) under \(\lambda=0.01\).}
		\label{fig:param_p}
	\end{figure}
	
	Compared with \(\lambda\), the recognition performance is generally less sensitive to \(p\), as shown in Fig.~\ref{fig:param_p}. Across the five factorization ranks, no obvious performance collapse is observed when \(p\) changes within the tested range \(\{3,5,7,9,11\}\). Although different datasets may prefer slightly different neighborhood sizes, the curves remain relatively stable in most cases. This indicates that GNRBMF is robust to the choice of neighborhood size once a reasonable nearest-neighbor graph has been constructed.
	
	In summary, the parameter sensitivity experiments show that \(\lambda\) has a more noticeable influence on GNRBMF, especially when an excessively large value is used. In contrast, the influence of \(p\) is relatively moderate within the tested range. These results suggest that GNRBMF can maintain stable recognition performance under a reasonable graph parameter setting.
	
	\subsection{Convergence Behavior}
	\label{subsec:convergence_behavior}
	
	To examine the numerical convergence behavior of the proposed algorithm, the objective value of GNRBMF is recorded with respect to the number of iterations on the three datasets. In this experiment, the factorization rank is fixed as \(l=15\), the regularization parameter is set to \(\lambda=0.01\), and the number of nearest neighbors is set to \(p=5\). At the \(t\)-th iteration, the objective value is calculated as
	\[
	F^{(t)}=F(\ddot W^{\,t},\ddot H^{\,t}),
	\]
	where \(F\) denotes the objective function in the GNRBMF model~\eqref{eq:gnrbmf_model}, and its component form is given in~\eqref{eq:objective_component_form}. Therefore, \(F^{(t)}\) contains both the reconstruction error term and the graph regularization term. For better visualization, the objective values are displayed in scientific notation when their magnitudes are large.
	
	\begin{figure}[!htbp]
		\centering
		\begin{tabular}{ccc}
			\includegraphics[width=0.31\textwidth]{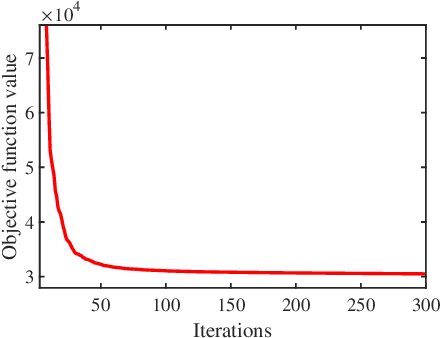} &
			\includegraphics[width=0.31\textwidth]{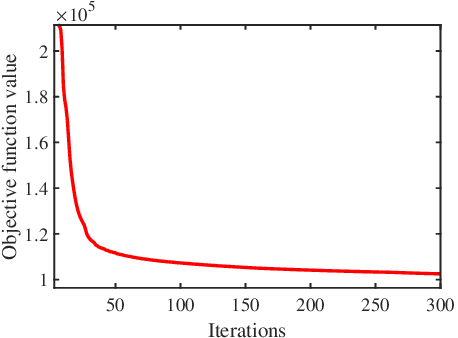} &
			\includegraphics[width=0.31\textwidth]{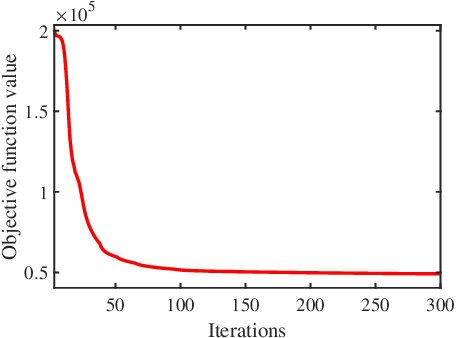} \\
			(a) CASIA-FaceV5 & (b) KDEF & (c) Asirra
		\end{tabular}
		\caption{Objective value versus iterations of GNRBMF on the three datasets.}
		\label{fig:convergence}
	\end{figure}
	
	As shown in Fig.~\ref{fig:convergence}, the objective values decrease rapidly during the first several iterations and then gradually approach stable values on all three datasets. The sharp decrease at the early stage indicates that the projected gradient updates can quickly reduce the reconstruction and graph regularization terms, while the later flat trend shows that the algorithm gradually reaches a stable optimization state. Similar decreasing patterns can be observed on CASIA-FaceV5, KDEF, and Asirra, which suggests that the optimization procedure is numerically stable for different types of color image data. These empirical observations are consistent with the convergence analysis given in Section~\ref{sec:optimization}.
	
	\section{Conclusion}
	\label{sec:conclusion}
	
	In this paper, a graph regularized non-negative reduced biquaternion matrix factorization model is proposed for color image recognition. By incorporating a graph Laplacian regularizer into the structured RB coefficient matrix, the proposed model retains the non-negativity property of NRBMF while preserving local geometric information in the obtained low-dimensional coefficient representations.
	
	To solve the resulting constrained optimization problem, we developed an alternating projected gradient algorithm with Armijo backtracking line search. The optimization procedure was derived in component form, which is consistent with the algebraic structure of reduced biquaternions and the imposed non-negativity constraints. The convergence analysis showed that the objective values generated by the proposed algorithm are nonincreasing and convergent. Moreover, if the generated sequence converges, then its limit point is a stationary point of the proposed GNRBMF optimization problem.
	
	Experiments on CASIA-FaceV5, KDEF, and Asirra demonstrated the effectiveness of GNRBMF\@. Compared with NRBMF and several real-valued, quaternion-based, and reduced-biquaternion-based methods, the proposed model achieves competitive or superior recognition performance in most tested settings. The results confirm that graph regularization can improve the discriminative ability of the obtained RB coefficients, while the parameter sensitivity and convergence experiments further demonstrate the stability of the proposed method.
	
	In our ongoing work, we are further investigating multiple-graph and auto-weighted graph regularization within the proposed framework. Such extensions may improve robustness by combining complementary neighborhood structures and reducing the dependence on a specific graph construction strategy. In addition, we will investigate more efficient optimization algorithms and apply the proposed model to broader color image analysis tasks.
	
	\backmatter
	
	%\bmhead{Acknowledgements}
	%This research was supported by the National Natural Science Foundation of China under Grant 12561070, and Yunnan Fundamental Research Project under Grant 202401AT070479.
	
	%\section*{Declarations}

	%\begin{itemize}
	%\item Declaration of competing interest: The authors declare that they have no conflict of interest.
	%\item Data availability: All datasets are publicly available.
	%\end{itemize}

\end{document}